\DeclareFontFamily{OT1}{pzc}{}
\DeclareFontShape{OT1}{pzc}{m}{it}{<-> s * [1.10] pzcmi7t}{}
\DeclareMathAlphabet{\mathpzc}{OT1}{pzc}{m}{it}
\newcommand{\g}{\mathcal{G}}
\newcommand{\Vreg}{\mathcal{V}^2_{\text{unique}}}
\newcommand\ourmethod{\textsc{Deep-Align}}
\def\eqref#1{equation~\ref{#1}}
\def\1{\bm{1}}
\def\eps{{\epsilon}}
\DeclareMathAlphabet{\mathsfit}{\encodingdefault}{\sfdefault}{m}{sl}
\SetMathAlphabet{\mathsfit}{bold}{\encodingdefault}{\sfdefault}{bx}{n}
\newcommand{\R}{\mathbb{R}}
\DeclareMathOperator*{\argmax}{arg\,max}
\theoremstyle{plain}
\newtheorem{theorem}{Theorem}[section]
\newtheorem{proposition}[theorem]{Proposition}
\newtheorem{lemma}[theorem]{Lemma}
\theoremstyle{definition}
\theoremstyle{remark}
\newcommand{\ghash}{g_{\#}}
\newcommand{\khash}{k_{\#}}
\newcommand{\V}{\mathcal{V}}
\newcommand{\haggai}[1]{%
\textcolor{blue}{}%
}
\newcommand{\as}[1]{%
\textcolor{orange}{}%
}
\newcommand{\an}[1]{%
\textcolor{magenta}{}%
}
\newcommand{\gal}[1]{%
\textcolor{cyan}{}%
}
\newcommand{\nd}[1]{%
\textcolor{red}{}%
}
\newcommand{\ef}[1]{%
\textcolor{magenta}{}%
}
\newcommand{\revision}[1]{#1}
\icmltitlerunning{Equivariant Deep Weight Space Alignment}
\begin{document}

\twocolumn[
\icmltitle{Equivariant Deep Weight Space Alignment}



\icmlsetsymbol{equal}{*}

\begin{icmlauthorlist}
\icmlauthor{Aviv Navon}{equal,biu}
\icmlauthor{Aviv Shamsian}{equal,biu}
\icmlauthor{Ethan Fetaya}{biu}
\icmlauthor{Gal Chechik}{biu,comp}
\icmlauthor{Nadav Dym}{tech}
\icmlauthor{Haggai Maron}{tech,comp}
\end{icmlauthorlist}

\icmlaffiliation{biu}{Bar-Ilan University}
\icmlaffiliation{comp}{NVIDIA Research}
\icmlaffiliation{tech}{Technion}

\icmlcorrespondingauthor{Aviv Navon}{aviv.navon@biu.ac.il}
\icmlcorrespondingauthor{Aviv Shamsian}{aviv.shamsian@biu.ac.il}

\icmlkeywords{Machine Learning, ICML}

\vskip 0.3in
]



\printAffiliationsAndNotice{\icmlEqualContribution} 

\begin{abstract}

Permutation symmetries of deep networks make basic operations like model merging and similarity estimation challenging. In many cases, aligning the weights of the networks, i.e., finding optimal permutations between their weights, is necessary. Unfortunately, weight alignment is an NP-hard problem. Prior research has mainly focused on solving relaxed versions of the alignment problem, leading to either time-consuming methods or sub-optimal solutions. To accelerate the alignment process and improve its quality, we propose a novel framework aimed at learning to solve the weight alignment problem, which we name \ourmethod{}. To that end,  we first prove that weight alignment adheres to two fundamental symmetries and then, propose a deep architecture that respects these symmetries. Notably, our framework does not require any labeled data.  
We provide a theoretical analysis of our approach and evaluate \ourmethod{} on several types of network architectures and learning setups. Our experimental results indicate that a feed-forward pass with \ourmethod{} produces better or equivalent alignments compared to those produced by current optimization algorithms. Additionally, our alignments can be used as an effective initialization for other methods, leading to improved solutions with a significant speedup in convergence.

\end{abstract}

\section{Introduction}



 



The space of deep network weights has a complex structure since networks maintain their function under certain permutations of their weights. This fact makes it hard to perform simple operations over deep networks, such as averaging their weights or estimating similarity. It is therefore highly desirable to ``align" networks -  find optimal permutations between the weight matrices of two networks.
%
Weight Alignment is critical to many tasks that involve weight spaces. One key application is model merging and editing \citep{ainsworth2022git,wortsman2022model,stoica2023zipit,ilharco2022editing}, in which the weights of two or more models are (linearly) combined into a single model to improve their performance or enhance their capabilities. Weight alignment algorithms are also vital to the study of the loss landscape of deep networks \citep{entezari2022the}, a recent research direction that has gained increasing attention. 
Moreover, weight alignment induces an invariant distance function on the weight space that can be used for clustering and visualization. 


Since weight alignment is NP-hard \citep{ainsworth2022git}, current approaches rely primarily on local optimization of the alignment objective which is time-consuming and may lead to suboptimal solutions. Therefore, identifying methods with faster run time and improved alignment quality is an important research objective. A successful implementation of such methods would allow practitioners to perform weight alignment in real time which is crucail for several applications.
\revision{
One example of such an application is federated learning setups where weight alignment can improve convergence speed and model quality when models are aligned before weight averaging is performed at each global step \cite{wang2020federated}. Additionally, the ability to align models in real-time is crucial for any task requiring the computation of multiple alignments in a reasonable time, including continual learning setups, weight space mixup \cite{shamsian2023data}, or clustering weight space data. 
}

Following a large body of works that suggested \emph{learning} to solve combinatorial optimization problems using deep learning architectures \citep{khalil2017learning,bengio2021machine,DBLP:journals/corr/abs-2102-09544}, we propose the first learning-based approach to weight alignment, called \ourmethod{}. \ourmethod{} is a neural network with a specialized architecture that is trained to predict high-quality weight alignments for a given distribution of models. A major benefit of our approach is that after a model has been trained, predicting the alignment between two networks amounts to a simple feed-forward pass through the network followed by an efficient projection step, as opposed to solving an optimization problem in other methods. 

This paper presents a principled approach to designing a deep architecture for the weight alignment problem. We first formulate the weight-alignment problem and prove it adheres to a specific equivariance structure. We then propose a neural architecture that respects this structure, based on newly suggested equivariant architectures for deep-weight spaces \citep{navon2023equivariant} called Deep Weight Space Networks (DWSNets). The architecture is based on a Siamese application of DWSNets to a pair of input networks, mapping the outputs 
to a lower dimensional space we call \emph{activation space}, and then using a generalized outer product layer to generate candidates for optimal permutations. 
\revision{ Most importantly, and similarly to other equivariant architectures \cite{zaheer2017deep,kondor2018generalization,cohen2021equivariant}, our equivariant design facilitates efficient learning and inference. Specifically, (1) the feedforward computation relies only on simple, efficient blocks \cite{navon2023equivariant}, (2) The inclusion of a parameter sharing scheme in each layer reduces the overall number of parameters, and  (3) accounting for symmetries reduces the training set size needed for learning (Figure~\ref{fig:cifar10_n_samples}). We demonstrate this efficiency by learning to align 10M+ parameter networks with \ourmethod{} using small datasets.}



Theoretically, we prove that our architecture can approximate the Activation Matching algorithm \cite{tatro2020optimizing,ainsworth2022git}, which computes the activations of the two networks on some pre-defined input data and aligns their weights by solving a sequence of linear assignment problems. This theoretical analysis suggests that \ourmethod{} can be seen as a learnable generalization of this algorithm. Furthermore, we prove that \ourmethod{} has a valuable theoretical property called \emph{Exactness}, which guarantees that it always outputs the correct alignment when there is a solution with zero objective.

Obtaining labeled training data is one of the greatest challenges when learning to solve combinatorial optimization problems. To address this challenge,  we generate labeled examples on the fly by applying random permutations and noise to the unlabeled data. \ef{what is the unlabeled data} \revision{
We then train our network by combining a supervised loss on these synthetically generated labeled examples and an unsupervised loss on arbitrary pairs of samples. Crucially, both losses do not rely on any externally labeled data.
}

Our experimental results indicate that \ourmethod{} produces better or comparable alignments relative to those produced by slower optimization-based algorithms, when applied to both MLPs and CNNs. 
Furthermore, we show that our alignments can be used as an initialization for other methods that result in even better alignments, as well as significant speedups in their convergence. Lastly, we show that our trained networks produce meaningful alignments even when applied to out-of-distribution weight space data. \revision{In particular, this is demonstrated by showing the effectiveness of our method in a federated learning setup}.


\begin{figure*}
    \centering
    \includegraphics[width=0.7\linewidth]{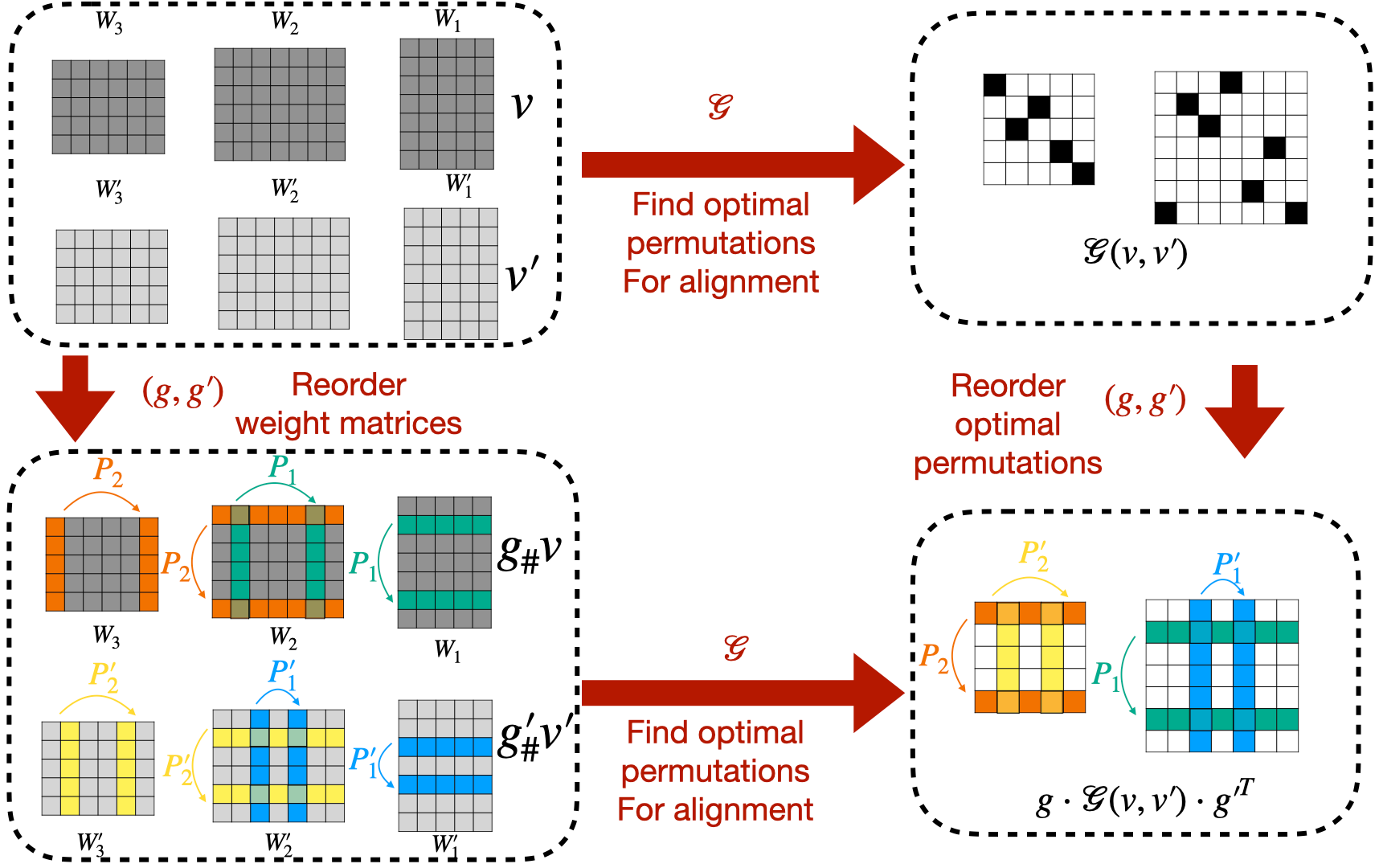}
    \caption{ The equivariance structure of the alignment problem. The function $\g$ takes as input two weight space vectors $v,v'$ and outputs a sequence of permutation matrices that aligns them denoted $\g(v,v')$. In case we reorder the input using $(g,g')$ where $g=(P_1,P_2),g'=(P'_1,P'_2)$, the optimal alignment undergoes a transformation, namely $\g(\ghash v,\ghash' v')=g\cdot \g(v,v')\cdot g'^T$ .}
    \label{fig:matching_equi}
\end{figure*}

\subsection{Previous work} Several algorithms have been proposed for weight-alignment  \citep{tatro2020optimizing, ainsworth2022git, pena2023re, akash2022wasserstein}. \citet{ainsworth2022git} presented three algorithms: Activation Matching, Weight Matching, and straight-through estimation. \citet{pena2023re} improved upon these algorithms by incorporating a Sinkhorn-based projection method. In part, these works were motivated by studying the loss landscapes of deep neural networks. It was conjectured that deep networks exhibit a property called \emph{linear mode connectivity}: for any two 
trained weight vectors (i.e., a concatenation of all the parameters of neural architecture),  a linear interpolation between the first vector and the optimal alignment of the second,  yields very small increases in the loss 
\citep{entezari2022the,garipov2018loss,draxler2018essentially,freeman2016topology,tatro2020optimizing}.
Another relevant research direction is the growing area of research that focuses on applying neural networks to neural network weights. Early methods proposed using simple architectures \citep{unterthiner2020predicting,andreis2023set,eilertsen2020classifying}. 
Several recent papers exploit the symmetry structure of the weight space in their architectures \citep{navon2023equivariant,zhou2023permutation,zhou2023neural,zhang2023neural,lim2023graph}. A comprehensive survey of relevant previous work can be found in Appendix \ref{sec:more_prev}.
\section{Preliminaries}

\revision{\textbf{Equivariance} Let $G$ be a group acting on $\mathcal{V}$ and $\mathcal{W} $. We say that   a function $L:\mathcal{V}\rightarrow \mathcal{W}$ is \emph{equivariant} if $L(gv)=gL(v)$ for all $v\in \mathcal{V},g\in G$.}

\textbf{MultiLayer Perceptrons and weight spaces.} The following definition follow the notation in \cite{navon2023equivariant}. An $M$-layer MultiLayer Perceptron (MLP) $f_v$ is a parametric function of the following form:
\begin{equation}\label{eq:mlp}
    f_v(x)= x_M, \quad x_{m+1}=\sigma(W_{m+1} x_m +b_{m+1}), \quad x_0=x
\end{equation}
Here, $x_m \in \mathbb{R}^{d_m}$, $W_m \in \mathbb{R}^{d_{m}\times d_{m-1}}$, $b_m \in \mathbb{R}^{d_m}$, and $\sigma$ is a pointwise activation function. Denote by $v=[W_m,b_m]_{m\in[M]}$ the concatenation of all (vectorized) weight matrices and bias vectors. 
We define the \emph{weight-space} of an $M$-layer MLP as:
$   \mathcal{V}=\bigoplus_{m=1}^M \left( \mathcal{W}_m\oplus \mathcal{B}_m \right)$, where $\mathcal{W}_m:=\mathbb{R}^{d_{m}\times d_{m-1}}$ $\mathcal{B}_m  = \mathbb{R}^{d_m}$ and $\bigoplus$ denotes the direct sum (concatenation) of vector spaces. A vector in this space represents all the learnable parameters on an MLP.
We define the \emph{activation space} of an MLP as 
$     \mathcal{A}=\bigoplus_{m=1}^M \mathbb{R}^{d_{m} } :=\bigoplus_{m=1}^M \mathcal{A}_m $.
The activation space, as its name implies, represents the concatenation of network activations at all layers. i.e., $\mathcal{A}_m$ is the space in which $x_m$ resides.


\textbf{Symmetries of weight spaces.} 
The permutation symmetries of the weight space are a result of the equivariance of pointwise activations: for every permutation matrix $P$ we have that $P\sigma(x)=\sigma(Px) $. Thus for example, a shallow network defined by weight matrices $W_1,W_2$ will represent the same function as the network defined by $PW_1,W_2P^T $, since the permutations cancel each other.  
%
The same idea can be used to identify permutation symmetries of general MLPs of depth $M$. In this case, the weight space's symmetry group is the direct product of symmetric groups for each intermediate dimension $m \in [1,M-1]$ namely, 
$S_{d_1}\times \cdots\times S_{d_{M-1}}$.
For clarity, we formally define the symmetry group as a product of matrix groups: 
$G=\Pi_{d_1}\times \cdots\times \Pi_{d_{M-1}}, $ 
where $\Pi_{d}$ is the group of $d\times d$ permutation matrices (which is isomorphic to $S_d$).
For $v\in \mathcal{V}$, $v=[W_m,b_m]_{m\in[M]}$,  
a group element $g=(P_1,\dots,P_{M-1})$ acts on $v$ via a group action $v'=\ghash(v)$, where $v'=[W'_m,b'_m]_{m\in[M]}$ is defined by:
\vspace{-0.7 em}
\begin{align*}
W_1'&= P_1 W_1,  W_{M}' =   W_M P^T_{M-1} , 
\text{ and }\\
~W_m'&= P_m W_m P^T_{m-1}, \forall m\in [2,M-1] \\
b_1' &= P_1 b_1,
b_{M'} = b_M,  \text{ and } \\
 b_m'&= P_{m} b_m, \forall m\in [2,M-1].
\end{align*}
By construction, $v$ and $v'=\ghash(v)$ define the same function $f_v=f_{v'} $. 
%
%
The group product $g\cdot g'$ and group inverse $g^{-1}=g^T$ are naturally defined as the elementwise matrix product and transpose operations
$g\cdot g'=(P_1P_1',\ldots,P_MP_M'), \quad g^T=(P_1^T,\ldots,P_m^T) .$
Note that the elementwise product and transpose operations are well defined even if the $P_m$ and $P_m'$ matrices are not permutations. 

\revision{\textbf{DWSNets.} In this paper, we use DWSNets~\cite{navon2023equivariant} as weight space encoders in \ourmethod{}.  This architecture is denoted by $F:\mathcal{V} \rightarrow \mathcal{V}$ and is of the following form:
$$F=L_k\circ \sigma \circ L_{k-1}\circ\sigma \dots \circ \sigma \circ L_1,$$
where $L_i$ are linear $G$-equivariant layers employing specific parameter sharing schemes \cite{ravanbakhsh2017equivariance} for processing weight spaces, and $\sigma:\R \rightarrow \R$ is a nonlinear function applied elementwise. This is similar to the design of many previous equivariant architectures \citep{zaheer2017deep,hartford2018deep,maron2018invariant,maron2020learning,wang2020equivariant}.  Each such layer $L:\mathcal{V}\rightarrow \mathcal{V}$ takes as input representations of \emph{all} the weights and biases which we denote as $v\in\mathcal{V}$, $v=[W_m,b_m]_{m\in[M]}$ and outputs new representations based on all the input weights and biases. We note that any other $G$-equivariant weight space network \cite{zhou2023neural,zhang2023neural,zhou2023permutation,lim2023graph} can be used seamlessly instead of DWSNets. }

\begin{figure*}[t!]
    \centering
    \includegraphics[width=.7\linewidth]{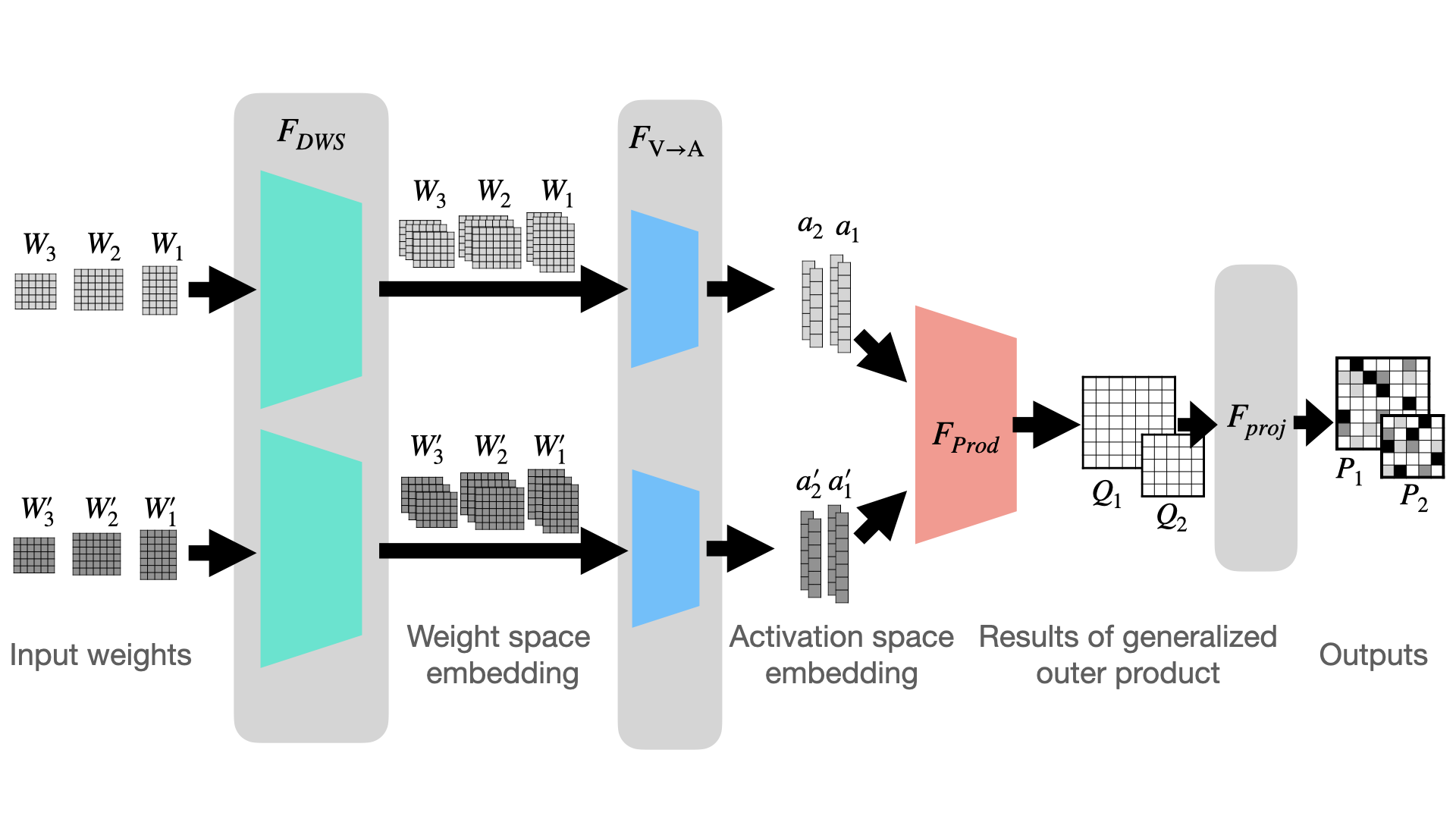}
    \caption{Our architecture is a composition of four blocks: The first block, $F_{DWS}$ generates weight space embedding for both inputs. The second block $F_{\mathcal{V}\rightarrow \mathcal{A}}$ maps these to the activation spaces. The third block, $F_{Prod}$,  generates square matrices by applying an outer product between the activation vector of one network to the activation vectors of the other network. Lastly, the fourth block, $F_{Proj}$ projects these square matrices on the (convex hull of) permutation matrices.  }
    \label{fig:arch}
\end{figure*}

\section{The weight alignment problem and its symmetries} \label{sec:problem}
\textbf{The weight alignment problem.}
Given an MLP architecture as in \eqref{eq:mlp} and two weight-space vectors $v,v'\in\mathcal{V}$, where $v=[W_m,b_m]_{m\in[M]},v'=[W'_m,b'_m]_{m\in[M]},$ the weight alignment problem is defined as the following optimization problem:
\begin{equation}\label{eq:wm_problem}
    \g(v,v') = \text{argmin}_{k\in G} \|v-\khash v' \|^2_2 
\end{equation}
In other words, the problem seeks a sequence of permutations $k=(P_1,\dots,P_{M-1})$ that will make $v'$ as close as possible to $v$.  
The optimization problem in \eqref{eq:wm_problem} always admits a minimizer since $G$ is finite. For some $(v,v')$ it may have several minimizers, in which case  $\g(v,v')$ is a set of elements. To simplify our discussion we will sometimes consider the domain of $\g$ to be only the set $\Vreg$ of pairs $(v,v')$ for which a unique minimizer exists. On this domain we can consider $\g$ as a function to the unique minimizer in $G$, that is $\g:\Vreg \rightarrow G$. 


Our goal in this paper is to devise an architecture that can learn the function $\g$. As a guiding principle for devising this architecture, we would like this architecture to be equivariant to the symmetries of $\g$. We describe these symmetries next.


\textbf{The symmetries of $\g$.}
 One important property of the function $\g$ is that it is equivariant to the action of the group $H=G\times G$ which consists of two independent copies of the permutation symmetry group for the MLP architecture we consider. Here, the action $h=(g,g')\in H$ on the input space $\mathcal{V}\times \mathcal{V} $ is simply $(v,v')\mapsto (\ghash v,\ghash'v')$, and the action of $h=(g,g')\in H$ on an element $k\in G$ in the output space is  given by $g\cdot k \cdot {g'}^T $. 
This equivariance property is summarized and proved in the proposition below and visualized using the commutative diagram in Figure \ref{fig:matching_equi}: applying $\g$ and then $(g,g')$ results in exactly the same output as applying $(g,g')$ and then $\g$.
\begin{proposition}\label{prop:equi}
    The map $\g$ is $H$-equivariant, namely, for all $(v,v')\in \Vreg$ and $(g,g')\in H$,
    $$\g(\ghash v,\ghash'v')=g\cdot \g(v,v')\cdot g'^T $$
\end{proposition}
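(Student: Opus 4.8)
## Proof proposal

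The plan is to reduce the equivariance statement to a direct manipulation of the objective function in \eqref{eq:wm_problem}, exploiting the fact that the group action $\ghash$ on the weight space is (essentially) an isometry up to a relabeling that interacts nicely with the action on $G$. First I would fix $(v,v')\in\Vreg$ and $(g,g')\in H$, and write out the definition:
\[
\g(\ghash v,\ghash' v') = \operatorname{argmin}_{k\in G}\ \|\ghash v - \khash(\ghash' v')\|_2^2 .
\]
The goal is to show this $\operatorname{argmin}$ equals $g\cdot\g(v,v')\cdot g'^T$. The natural route is a change of variables in the minimization: substitute $k = g\cdot \tilde k\cdot g'^T$, observe that as $\tilde k$ ranges over $G$ so does $k$ (since $g,g'$ are invertible in $G$ and $G$ is closed under the group product), and then simplify the objective $\|\ghash v - (g\tilde k g'^T)_\#(\ghash' v')\|_2^2$.

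The key algebraic step is to verify that the composite action satisfies $(g\tilde k g'^T)_\#\circ \ghash' = g_\# \circ \tilde k_\# $ when applied to $v'$ — i.e., that the $\khash$ action composes contravariantly with $\ghash'$ in the expected way, and the $g'^T$ factor cancels the $g'$ coming from $\ghash'$. This follows from the explicit formulas for the action given in Section 2: each weight matrix $W'_m$ gets hit on the left by $P_m$-type factors and on the right by $P_{m-1}^T$-type factors, and these multiply according to the group product $g\cdot g' = (P_1P'_1,\dots)$ and inverse $g^T=(P_1^T,\dots)$ already defined in the preliminaries; the transposes in the output action are exactly what is needed to make the right-multiplications telescope. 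So after the substitution the objective becomes $\|\ghash v - g_\#(\tilde k_\#( v'))\|_2^2$. Then I would use that $g_\#$ acts on $\mathcal{V}$ by (blockwise) multiplication by permutation matrices, which are orthogonal, so $g_\#$ is a linear isometry of $\mathcal{V}$: hence $\|\ghash v - g_\#(\tilde k_\# v')\|_2^2 = \|g_\#(v - \tilde k_\# v')\|_2^2 = \|v-\tilde k_\# v'\|_2^2$, using $\ghash v = g_\#(v)$ and linearity of $g_\#$.

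Putting these together, the minimization over $k$ of $\|\ghash v - \khash \ghash' v'\|_2^2$ is, under $k=g\tilde k g'^T$, exactly the minimization over $\tilde k$ of $\|v-\tilde k_\# v'\|_2^2$, whose unique minimizer is $\g(v,v')$ by definition (and by the assumption $(v,v')\in\Vreg$). Therefore the unique minimizer of the transformed problem is $k = g\cdot\g(v,v')\cdot g'^T$, which is the claim. I should also note in passing that $(\ghash v,\ghash' v')\in\Vreg$ — i.e. the transformed problem still has a \emph{unique} minimizer — which is immediate from the bijection $k\leftrightarrow \tilde k$ we just established, so the statement is well-posed.

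The main obstacle is the bookkeeping in the key algebraic step: carefully checking that $(g k g'^T)_\#(\ghash' v') = g_\#(k_\#(v'))$ using the layer-wise formulas, keeping track of which indices get $P_m$ versus $P_{m-1}^T$ and confirming the first/last layer special cases ($W_1$ has no right factor, $W_M$ has no left factor, $b_M$ is fixed) are consistent with $g'^T$ cancelling $g'$. This is routine but must be done per weight-matrix block; once it is in place, the isometry argument and the change-of-variables are short. Everything is well-defined because, as the preliminaries note, the elementwise matrix product and transpose on $G$ are defined regardless of whether the factors are permutations, so composing the actions symbolically is legitimate.
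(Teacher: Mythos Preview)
Your proposal is correct and follows essentially the same approach as the paper. Both arguments hinge on the bijection $k\leftrightarrow g\,k\,g'^T$ on $G$, the identity $(g\,k\,g'^T)_\#\circ g'_\# = g_\#\circ k_\#$ (equivalently, the paper writes $\|\ghash v - \khash \ghash' v'\|^2 = \|v - (g^{-1}kg')_\# v'\|^2$), and the fact that $g_\#$ is a linear isometry; your change-of-variables presentation is a slightly more streamlined packaging of the paper's two-step argument (equality of minimal values via the bijection, then verification that $g\cdot\g(v,v')\cdot g'^T$ attains that value), and you additionally make the well-posedness point $(\ghash v,\ghash' v')\in\Vreg$ explicit.
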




The function $\g$ exhibits another interesting property: swapping the order of the inputs $v,v'$ corresponds to inverting the optimal alignment $\g(v,v') $
:
\begin{proposition}\label{prop:inv}
    Let $(v,v')\in \Vreg$ then $\g(v',v)=\g(v,v')^T$.
\end{proposition}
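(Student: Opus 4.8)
The plan is to work directly from the definition in \eqref{eq:wm_problem} and exploit the group structure of $G$. Recall that $\g(v,v') = \operatorname{argmin}_{k\in G}\|v - \khash v'\|_2^2$. The key algebraic fact I will use is that for any $k\in G$ and any weight-space vector $u$, the group action is norm-preserving: $\|\khash u\|_2 = \|u\|_2$. This holds because each $P_m$ is a permutation matrix (hence orthogonal) and the action on each block $W_m$, $b_m$ is by left and/or right multiplication by such matrices, which preserves Frobenius/Euclidean norms. I would state this as a preliminary observation, since it is the workhorse of the argument.

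Given that, here is the main computation. Fix $(v,v')\in\Vreg$ and let $k^\star = \g(v,v')$, the unique minimizer. For an arbitrary $k\in G$, apply the action of $k^{-1} = k^T$ to both arguments inside the squared norm defining the objective for the swapped pair: since the action preserves norms, $\|v' - \khash v\|_2^2 = \|(k^{-1})_\#(v' - \khash v)\|_2^2 = \|(k^{-1})_\# v' - v\|_2^2 = \|v - (k^{-1})_\# v'\|_2^2$, where in the second equality I use that $(k^{-1})_\#\khash u = u$ (the action is a group action, so $(k^{-1}k)_\# = \mathrm{id}$) and in the last that $\|\cdot\|_2$ is symmetric. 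Thus the objective value of $k$ for the pair $(v',v)$ equals the objective value of $k^{-1} = k^T$ for the pair $(v,v')$. Since $k\mapsto k^T$ is a bijection of $G$ onto itself, minimizing over $k\in G$ for $(v',v)$ is the same as minimizing over $k^T\in G$ for $(v,v')$; hence the minimum values coincide, the minimizer is again unique (so $(v',v)\in\Vreg$), and $\g(v',v) = \big(\g(v,v')\big)^T = \g(v,v')^T$, which is the claim.

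The one point that needs a little care — and the closest thing to an obstacle — is the bookkeeping around inverses versus transposes and the direction of the action, i.e. making sure that $(k^{-1})_\#$ really is the inverse of $\khash$ as maps on $\mathcal V$, and that $k^{-1} = k^T$ componentwise as already noted in the preliminaries ($g^{-1} = g^T$). This is where a sign/order slip could creep in, because the action on the first and last weight matrices is one-sided while on the middle ones it is two-sided; but since $\ghash$ is a genuine group action (the paper states $v$ and $\ghash(v)$ define the same function and the group law is componentwise matrix product), functoriality gives $(k^{-1})_\# \circ \khash = \mathrm{id}_{\mathcal V}$ for free, so no block-by-block verification is actually required. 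I would also remark that uniqueness transfers because the map $k\mapsto k^T$ is an involutive bijection on the finite set $G$, so a unique minimizer on one side forces a unique minimizer on the other, keeping everything well-defined on $\Vreg$.
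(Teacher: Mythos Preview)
Your proof is correct and follows essentially the same approach as the paper's: both exploit that the group action $\khash$ preserves the Euclidean norm to rewrite $\|v'-\khash v\|_2^2$ as $\|v-(k^T)_\# v'\|_2^2$, then conclude via the bijection $k\mapsto k^T$ on $G$. Your version is in fact a bit more careful than the paper's, since you explicitly observe that uniqueness of the minimizer transfers (so that $(v',v)\in\Vreg$), a point the paper leaves implicit.
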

    

\revision{\textbf{Extension to multiple minimizers.} For simplicity the above discussion focused on the case where $(v,v')\in \Vreg$. We can also state analogous claims for the general case where multiple minimizers are possible. In this case we will have that the equalities  $g\cdot \g(v,v')\cdot g'^{T} = \g(gv,g'v')$ and $\g(v,v')^{T} = \g(v',v)$ still hold as equalities between subsets of $G$.} 

\textbf{Extension to other optimization objectives.} In Appendix~\ref{app:problem_proofs} we show that the equivariant structure of the function $\g$ occurs not only for the  objective in \eqref{eq:wm_problem}, but also when the objective $\|v-\khash v' \|^2_2$ is replaced with any scalar function $E(v,\khash v')$ that satisfies the following properties: (1) $E$ is invariant to the action of $G$ on both inputs; and (2)  $E$ is invariant to swapping its arguments.



\section{\ourmethod{}}\label{sec:method}

%



\begin{figure*}[t!]
  \begin{subfigure}[CIFAR10 MLPs.]{
    \includegraphics[width=0.3\linewidth]{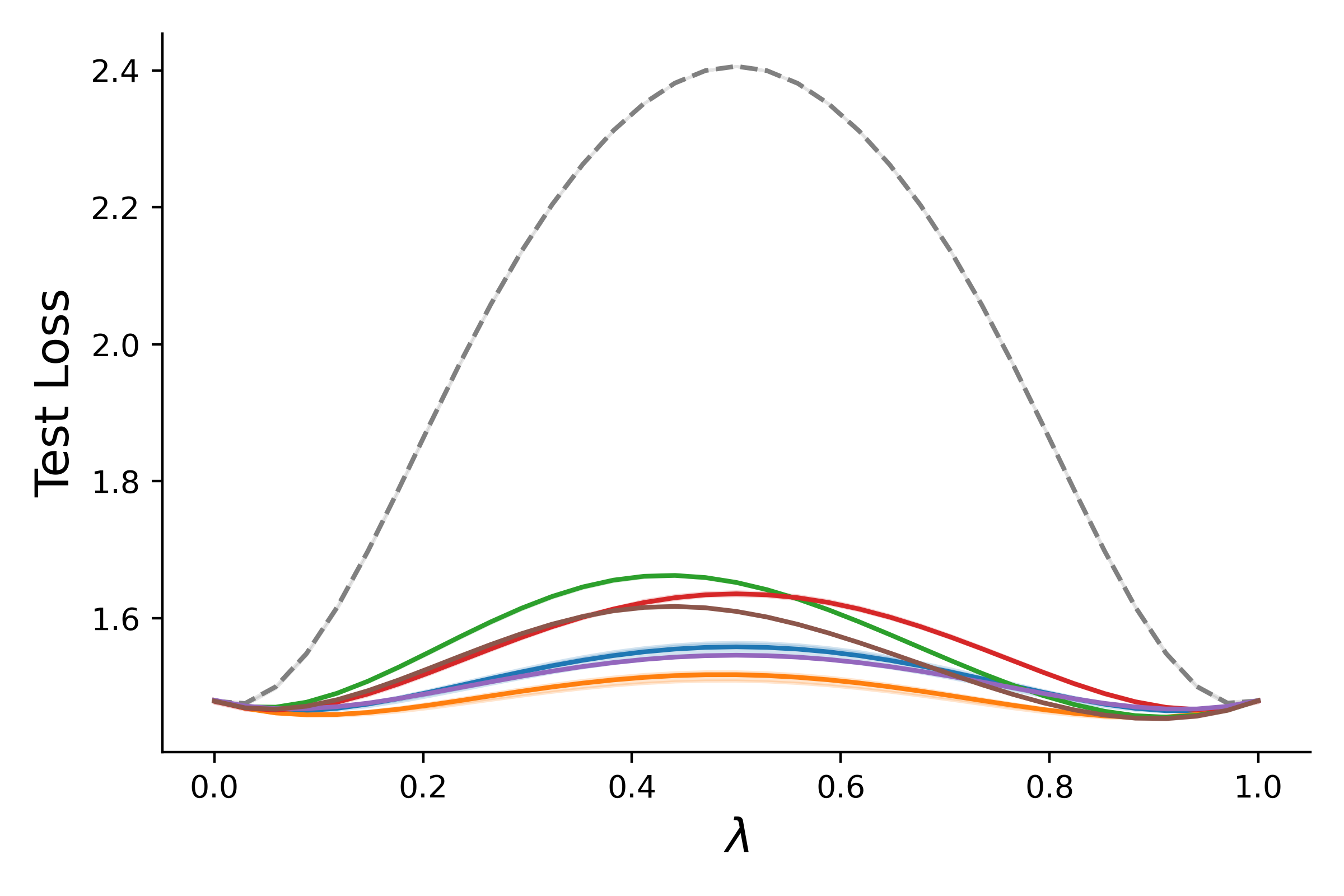}
    }
    \label{fig:cifar10_mlp}
  \end{subfigure}  
  \hspace*{\fill}
  \begin{subfigure}[CIFAR10 CNNs.]{
  \includegraphics[width=0.3\linewidth]{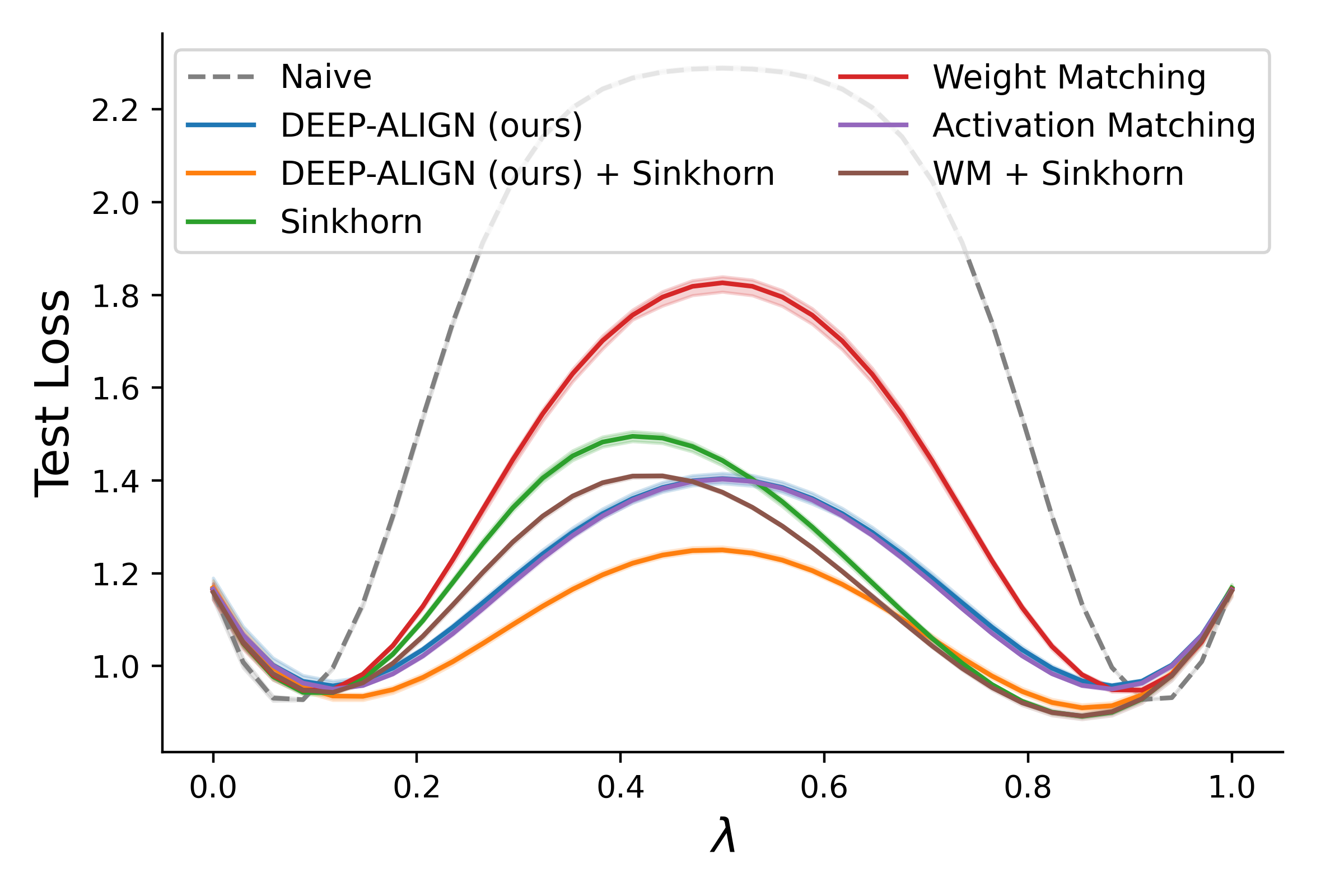}
    }
    \label{fig:cifar10_cnn_main}
  \end{subfigure}
    \hspace*{\fill}
  \begin{subfigure}[CIFAR10 VGG11.]{
    \includegraphics[width=0.3\linewidth]{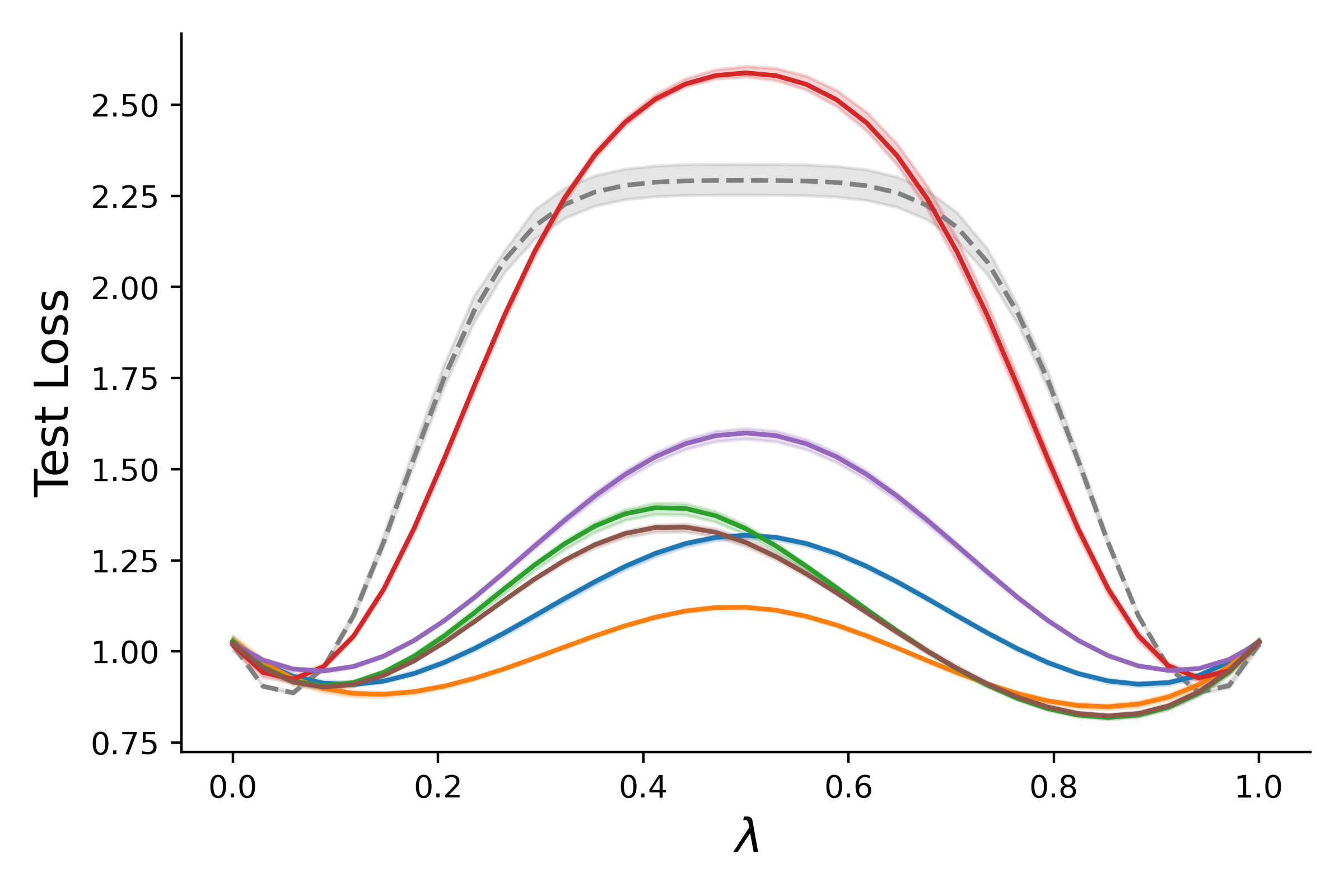}
    }
    \label{fig:cifar10_vgg11}
  \end{subfigure}
  
\caption{
\textit{Merging image classifiers}: the plots illustrate the values of the loss function used for training the input networks when evaluated on a line segment connecting $v$ and $\ghash v'$, 
where $g$ is the output of each method. Values are averaged over all test images and networks and 3 random seeds.} \label{fig:align_cls}
\end{figure*}

\subsection{Architecture}\label{sub:arch}
Here, we define a neural network architecture $F=F(v,v';\theta)$ for learning the weight-alignment problem. The output of $F$ will be a sequence of square matrices $(P_1,\dots,P_{M-1})$ that represents a (sometimes approximate) group element in $G$. In order to provide an effective inductive bias, we will ensure that our architecture meets both properties:  \ref{prop:equi},\ref{prop:inv}, namely
$ F(\ghash v,\ghash' v')=g\cdot F(v,v')\cdot {g'}^T \text{ and } F(v,v')=F(v',v)^T. $  

The architecture we propose is composed of four functions: 
$$F=F_{proj}\circ F_{prod} \circ F_{\mathcal{V}\rightarrow \mathcal{A}} \circ F_{DWS} : \mathcal{V} \times \mathcal{V}' \rightarrow  \bigoplus_{m=1}^{M-1} \mathbb{R}^{d_m\times d_m}, $$ 
where the equivariance properties we require are guaranteed by constructing each of the four functions composing $F$ to be equivariant with respect to an appropriate action of $H=G\times G $ and the transposition action $(v,v')\mapsto (v',v) $.
In general terms, we choose $F_{DWS}$ to be a siamese weight space encoder, $F_{\mathcal{V}\rightarrow \mathcal{A}}$ is a siamese function that maps the weight space to the activation space, $F_{prod}$ is a function that performs (generalized) outer products between corresponding activation spaces in both networks and $F_{proj}$ performs a projection of the resulting square matrices on the set of doubly stochastic matrices (the convex hull of permutation matrices). The architecture is illustrated in Figure \ref{fig:arch}. We now describe our architecture in more detail.

\textbf{Weight space encoder .} $F_{DWS}:\mathcal{V} \times \mathcal{V}' \rightarrow \mathcal{V}^d \times \mathcal{V}'^d$, where $d$ represents the number of feature channels, is implemented as a Siamese DWSNet \citep{navon2023equivariant}. This function outputs two weight-space embeddings in $\mathcal{V}^d$,  namely, $ F_{DWS}(v,v')=(\mathcal{E}(v),\mathcal{E}(v')),$ for a DWS network $\mathcal{E}$. \revision{The Siamese structure of the network guarantees equivariance to transposition. This is because the same encoder is used for both inputs, regardless of their input order. The  $G$-equivariance of DWSNet, on the other hand,  implies equivariance to the action of $G\times G$, that is
$(\mathcal{E}(\ghash v),\mathcal{E}(\ghash' v'))=(\ghash \mathcal{E}(v),\ghash' \mathcal{E}(v'))$. }

\textbf{Mapping the weight space to the activation space.} The function $F_{\mathcal{V} \rightarrow \mathcal{A} }: \mathcal{V}^d \times \mathcal{V}'^d \rightarrow \mathcal{A}^d \times \mathcal{A}'^d $ maps the weight spaces $\mathcal{V}^d,\mathcal{V}'^d$ to the corresponding Activation Spaces (see preliminaries section). There are several ways to implement $F_{\mathcal{V} \rightarrow \mathcal{A}}$. As the bias space, $\mathcal{B}=\bigoplus_{m=1}^M \mathcal{B}_m$, and the activation space have a natural correspondence between them, perhaps the simplest way, which we use in this paper, is to map a weight space vector $v=(w,b)\in \mathcal{V}^d$ to its bias component $b\in \mathcal{B}^d$. \revision{We emphasize that the bias representation is extracted from the previously mentioned weight space encoder, and in that case, it depends on and represents both the weights and the biases of the input}. This operation is again equivariant to transposition and the action of $G\times G$, where the action of $G\times G$ on the input space is the more complicated action (by $(\ghash,\ghash')$) on $\V\times \V$ and the action on the output space is the simpler action of $G\times G$ on the activation spaces.

\textbf{Generalized outer product.} $ F_{prod}:  \mathcal{A}^d \times \mathcal{A}'^d \rightarrow \bigoplus_{m=1}^M \mathbb{R}^{d_m\times d_m}$ is a function that takes the activation space features and  performs a \emph{generalized outer product} operation as defined below: 
$$ F_{prod}(a,a')_{m,i,j} = \phi([a_{m,i},a'_{m,j}])$$
where the subscripts $m,i,j$ represent the $(i,j)$-th entry of the $m$-th matrix, and $a_{m,i},a'_{m,j} \in \mathbb{R}^d$ are the rows of $a,a'$. Here, the function $\phi$ is a general (parametric or nonparametric) symmetric function in the sense that $\phi(a,b)=\phi(b,a)$. In this paper, we use $\phi(a,b) =s^2\langle a/\Vert a\Vert_2,b/\Vert b\Vert_2\rangle $ where $s$ is a trainable scalar scaling factor. The equivariance with respect to the action of $G\times G$ and transposition is guaranteed by the fact that $\phi$ is applied elementwise, and is symmetric, respectively.

\begin{table*}[t]
\small
\centering
    \caption{\textit{CNN image classifiers}: Results on aligning CIFAR10 and STL10 CNN image classifiers.}
    \vskip 0.11in
\resizebox{\textwidth}{!}{
\begin{tabular}{lccccccccccc}
\toprule
 &  \multicolumn{2}{c}{CNN (CIFAR10)} & 
 \multicolumn{2}{c}{CNN (STL10)} & \multicolumn{2}{c}{VGG11 (CIFAR10)} & \multicolumn{2}{c}{VGG16 (CIFAR10)} \\
 \cmidrule(lr){2-3} \cmidrule(lr){4-5} \cmidrule(lr){6-7} \cmidrule(lr){8-9}
 & Barrier $\downarrow$ & AUC $\downarrow$ & Barrier $\downarrow$ & AUC $\downarrow$ & Barrier $\downarrow$ & AUC $\downarrow$ &  Barrier $\downarrow$ & AUC $\downarrow$ \\
\midrule
Naive & $1.12 \pm 0.01$ & $0.52 \pm 0.00$ & $1.00 \pm 0.00$ & $0.65 \pm 0.00$ & $1.27 \pm 0.04$ & $0.73 \pm 0.02$ & $1.13 \pm 0.04$ & $0.77 \pm 0.03$ \\ 
\midrule
Weight Matching & $0.66 \pm 0.02$ & $0.17 \pm 0.01$ & $0.85 \pm 0.00$ & $0.45 \pm 0.00$ & $ 1.56 \pm 0.01$ & $0.75 \pm 0.00$& $2.07 \pm 0.10$ & $1.31 \pm 0.07$ \\ 
Activation Matching & $0.23 \pm 0.01$ & $ \mathbf{0.00 \pm 0.00} $ & $0.47 \pm 0.00$ & $0.25 \pm 0.00$ & $0.57 \pm 0.01$ & $0.20 \pm 0.00$ & $2.33 \pm 0.04$ & $1.12 \pm 0.02$ \\ 
Sinkhorn & $0.31 \pm 0.01 $ & $\mathbf{0.00 \pm 0.00}$ & $0.36 \pm 0.00$ & $0.16 \pm 0.00$ & $0.36 \pm 0.00$ & $0.04 \pm 0.00$ & $0.92 \pm 0.05$ & $0.21 \pm 0.02$ \\ 
\revision{WM + Sinkhorn} & $ 0.24 \pm 0.00 $ & $\mathbf{0.00 \pm 0.00}$ & $ 0.31 \pm 0.00$ & $ 0.14 \pm 0.00$ & $0.31 \pm 0.02$ & $0.02 \pm 0.01$ & $0.87 \pm 0.03 $ & $0.20 \pm 0.00$ \\ 
\midrule
\ourmethod{} & $0.23 \pm 0.01$ & $\mathbf{0.00 \pm 0.00}$ & $ 0.38 \pm 0.01$ & $0.18 \pm 0.00$ & $0.29 \pm 0.00$ & $0.05 \pm 0.00$ & $0.67 \pm 0.01$ & $0.26 \pm 0.01$ \\ 
\ourmethod{} + Sinkhorn & $ \mathbf{0.08 \pm 0.00} $ & $\mathbf{0.00 \pm 0.00}$ & $\mathbf{0.23 \pm 0.00}$ & $\mathbf{0.09 \pm 0.00}$ & $\mathbf{0.09 \pm 0.01}$ & $\mathbf{0.00 \pm 0.00}$ & $\mathbf{0.31 \pm 0.01}$ & $\mathbf{0.01 \pm 0.01}$ \\ 
\bottomrule
\end{tabular}
}
\label{tab:align_cls_cnn}
\end{table*}

\textbf{Projection layer.} The output of $F_{prod}$ is a sequence of matrices $Q_1,\ldots,Q_{M-1} $ which in general will not be permutation matrices.  To bring the outputs closer to permutation matrices, $F_{proj}$ implements a 
approximate projection onto the convex hull of the permutation matrices, i.e., the space of doubly stochastic matrices. In this paper, we use two different projection operations, depending on whether the network is in training or inference mode. 
At training time, to ensure differentiability,  we implement  $F_{proj}$ as an approximation of a matrix-wise projection  $Q_m$ to the space of doubly stochastic matrices using several iterations of the well-known Sinkhorn projection \citep{mena2018learning, sinkhorn1967diagonal}. Since the set of doubly stochastic matrices is closed under the action of $G\times G$  on the output space, and under matrix transposition, and since the Sinkhorn iterations are composed of elementwise, row-wise, or column-wise operations, we see that this operation is equivariant as well.
At inference time, we obtain permutation matrices from $Q_i$ by finding the permutation matrix $P_i$ which has the highest correlation with $Q_i$, that is  
$P_i=\argmax_{P\in S_{d_i}}\langle Q_i,P \rangle ,$
where the inner product is the standard Frobenious inner product. This optimization problem, known as the \emph{linear assignment problem}, can be solved using 
the Hungarian algorithm. 

As we carefully designed the components of $F$ so that they are all equivariant to transposition and the action of $G \times G$, we obtain the following proposition:

\begin{proposition}
   The architecture $F$ satisfies the conditions specified in \ref{prop:equi},\ref{prop:inv}, namely for all $(v,v')\in \V \times \V$ and $(g,g')\in H$ we have: 
    $F(\ghash v,\ghash'v')=g\cdot F(v,v')\cdot g'^T \text{ and } F(v,v')=F(v',v)^T.$
    
\end{proposition}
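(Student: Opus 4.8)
The plan is a purely compositional argument. Since $F=F_{proj}\circ F_{prod}\circ F_{\mathcal V\rightarrow\mathcal A}\circ F_{DWS}$, and since a composition of equivariant maps is equivariant whenever the action posited on the codomain of one factor agrees with the action posited on the domain of the next, it suffices to (a) fix the correct action of $H=G\times G$ and of the transposition $\tau\colon(v,v')\mapsto(v',v)$ at each object in the pipeline, and (b) check that each of the four blocks intertwines the action on its domain with the action on its codomain. The relevant actions, from input to output, are: on $\mathcal V\times\mathcal V'$ and on $\mathcal V^d\times\mathcal V'^d$, $h=(g,g')$ acts by $(v,v')\mapsto(\ghash v,\ghash' v')$ and $\tau$ by the swap; on $\mathcal A^d\times\mathcal A'^d$, $g=(P_1,\dots,P_{M-1})$ acts \emph{simply} by sending the $m$-th activation block $a_m$ to $P_m a_m$, with $\tau$ again the swap; and on $\bigoplus_m\mathbb R^{d_m\times d_m}$ (the codomain of both $F_{prod}$ and $F_{proj}$), $h=(g,g')$ acts by $Q_m\mapsto P_m Q_m P_m'^T$ and $\tau$ acts by $Q_m\mapsto Q_m^T$. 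With these identifications, the two identities to be proved are exactly $H$-equivariance of $F$ from the first action to the last, and $\tau$-equivariance from the swap to the transpose.

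I would then dispatch the four blocks in order. For $F_{DWS}$: it is a Siamese application of one DWSNet $\mathcal E$ to each argument, so it commutes with $\tau$; and the $G$-equivariance of DWSNets, applied in each branch, gives $(\mathcal E(\ghash v),\mathcal E(\ghash' v'))=(\ghash\mathcal E(v),\ghash'\mathcal E(v'))$, i.e. equivariance from the weight-space action to the weight-space action. For $F_{\mathcal V\rightarrow\mathcal A}$: this is the branchwise projection $(w,b)\mapsto b$, so $\tau$-equivariance is immediate, and the only substantive check is that the bias part of $\ghash v$ carries the simple action — read off directly from $b_1'=P_1b_1$, $b_m'=P_mb_m$ for $m\in[2,M-1]$, $b_M'=b_M$ — so this block is equivariant from the weight-space action to the simple activation action. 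For $F_{prod}$: writing $\pi_m$ for the permutation with $P_m e_j=e_{\pi_m(j)}$, so $(P_m a_m)_i=a_{m,\pi_m^{-1}(i)}$, a one-line computation gives $F_{prod}(\ghash a,\ghash' a')_{m,i,j}=\phi([a_{m,\pi_m^{-1}(i)},a'_{m,\pi_m'^{-1}(j)}])=F_{prod}(a,a')_{m,\pi_m^{-1}(i),\pi_m'^{-1}(j)}=(P_m F_{prod}(a,a')_m P_m'^T)_{ij}$, which is equivariance from the simple activation action to the two-sided-multiplication action; and the hypothesis $\phi(x,y)=\phi(y,x)$ gives $F_{prod}(a',a)_m=F_{prod}(a,a')_m^T$, so $\tau$ is sent to the transpose. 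For $F_{proj}$: the Euclidean projection onto the set of doubly stochastic matrices commutes with any ambient isometry preserving that set, and both $Q\mapsto P_m Q P_m'^T$ and $Q\mapsto Q^T$ are such isometries; in the implementation, each Sinkhorn half-step is a row- or column-rescaling, which commutes with permuting rows and columns, while the inference-time step $P_m=\argmax_{P\in S_{d_m}}\langle Q_m,P\rangle$ satisfies $\argmax_P\langle P_m Q_m P_m'^T,P\rangle=P_m\bigl(\argmax_P\langle Q_m,P\rangle\bigr)P_m'^T$ because $P\mapsto P_m^T P P_m'$ is an objective-preserving bijection of $S_{d_m}$, and likewise $\argmax_P\langle Q_m^T,P\rangle=\bigl(\argmax_P\langle Q_m,P\rangle\bigr)^T$; so $F_{proj}$ is equivariant from the two-sided action to itself and sends transpose to transpose.

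Chaining the four equivariances along the matching interfaces then yields the claim: $H$-equivariance of $F$ with the two-sided output action, i.e. $F(\ghash v,\ghash'v')=g\cdot F(v,v')\cdot g'^T$, and $\tau$-equivariance with the transpose output action, i.e. $F(v,v')=F(v',v)^T$. I expect the work to be bookkeeping rather than conceptual. The one place requiring genuine care is the index convention in the $F_{prod}$ step, so that the left factor really comes out as $P_m$ and the right as $P_m'^T$ (and not their transposes). For $F_{proj}$ one should also be slightly careful: the practical Sinkhorn operator is equivariant to transposition only up to the order in which the row/column sweeps are interleaved — equivalently, exactly in the limit — and the Hungarian $\argmax$ is set-valued at ties, so the identities there are cleanest read either for the idealized projection onto the doubly stochastic polytope or as equalities of $\argmax$ sets on generic inputs. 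A minor related point worth flagging is that the chosen $\phi(a,b)=s^2\langle a/\Vert a\Vert,b/\Vert b\Vert\rangle$ is undefined on zero activations; this does not affect equivariance, since the permutation and swap actions preserve the set of inputs possessing a vanishing activation row.
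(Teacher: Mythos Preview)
Your proposal is correct and follows essentially the same approach as the paper: a compositional argument checking that each of the four blocks $F_{DWS}$, $F_{\mathcal V\rightarrow\mathcal A}$, $F_{prod}$, $F_{proj}$ is equivariant to the appropriate $H$-action and to the swap/transpose, then chaining. The paper does not give a standalone proof but rather embeds the block-by-block justifications inline in Section~\ref{sub:arch}; your writeup is a more detailed and careful version of that same argument, and the caveats you flag (Sinkhorn row/column sweep order versus transposition, ties in the Hungarian $\argmax$, $\phi$ undefined at zero activations) are genuine fine points the paper glosses over.
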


\subsection{Data generation and Loss functions} \label{sub:loss}

Generating labeled data for the weight-alignment problem is hard due to the intractability of the problem. Therefore, we propose a combination of both unsupervised and supervised loss functions where we generate labeled examples synthetically from unlabeled examples, as specified below.

\textbf{Data generation.} Our initial training data consists of a finite set of weight space vectors 
$ D\subset \mathcal{V}$. From that set, we generate two datasets consisting of pairs of weights for the alignment problem. 
First, we generate a labeled training set,   
$D_{\text{labeled}}=\{(v^j,v'^j,t^j) \}_{j=1}^{N_{\text{labeled}}}$ for $t^j=(T^j_1,\dots, T^j_{M-1})\in G$. This is done by sampling $v^j\in D$ and defining $v'^j$ as a permuted and noisy version of $v^j$. More formally, we sample a sequence of permutations $t\in G$ and define $v'^j=t_\# f_{\text{aug}}(v^j)$, where $f_{\text{aug}}$ applies several weight-space augmentations, like adding binary and Gaussian noise, scaling augmentations for ReLU networks, etc. We then set the label of this pair to be  $t$. 
In addition, we define an unlabeled dataset $D_{\text{unlabeled}}=\{(v^j,v'^j) \}_{j=1}^{{N}_{\text{unlabeled}}}$  where $v^j,v'^j\in \mathcal{V}$.

\textbf{Loss functions.} The datasets above are used for training our architecture using the following loss functions. 
The labeled training examples in $D_{\text{labeled}}$ are used by applying a cross-entropy loss for each row $i=1,\dots,d_m$ in each output matrix $m=1,\dots,M-1$. This loss is denoted as $\ell_{\text{supervised}}(F(v,v';\theta),t)$.
%
The unlabeled training examples are used in 
combination with two unsupervised loss functions. The first loss function aims to minimize the alignment loss in \eqref{eq:wm_problem} directly by using the network output $F(v,v';\theta)$ as the permutation sequence. This loss is denoted as 
$\ell_{\text{alignment}}(v,v',\theta)=\| v-F(v,v';\theta)_\#v'\|_2^2$.
The second unsupervised loss function aims to minimize the original loss function used to train the input networks on a line segment connecting the weights $v$ and the transformed version of $v'$ using the network output $F(v,v';\theta)$ as the permutation sequence. 
Concretely, let $\mathcal{L}$ denote the original loss function used to train the weight vectors $v,v'$, the loss is defined as $\ell_{\text{LMC}}(v,v',\theta)=\mathcal{L}(\lambda v+(1-\lambda) F(v,v';\theta)_\# v')$ for $\lambda$ sampled uniformly $\lambda\sim U(0, 1)$\footnote{This loss function satisfies the properties as described in Section~\ref{sec:problem} when taking expectation over $\lambda$.}. This loss is similar to the STE method in \cite{ainsworth2022git} and the differentiable version in \cite{pena2023re}.
Our final goal is to minimize the parameters of $F$ with respect to a linear (positive) combination of  $\ell_{\text{alignment}},\ell_{\text{LMC}}$ and $\ell_{\text{supervised}}$ applied to the appropriate datasets described above.

\begin{figure*}
\begin{minipage}[t]{.45\linewidth}
\centering  
\includegraphics[width=1.\linewidth]{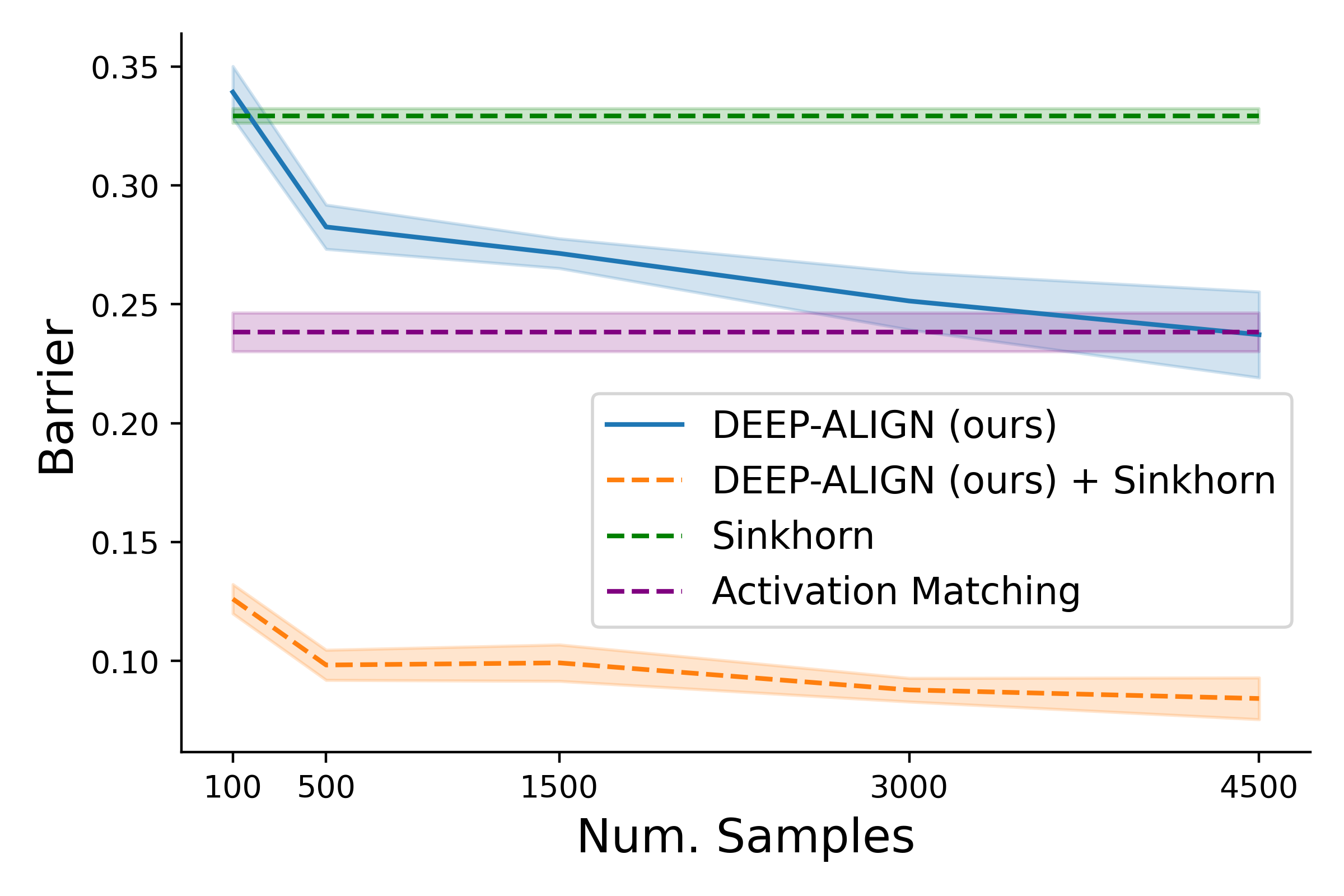}
\vspace{-11pt}
    \caption{\textit{Sample size}: The test barrier for aligning CIFAR10 CNN classifiers with a varying number of training examples.}
    \label{fig:cifar10_n_samples}
\end{minipage}\hfill
\begin{minipage}[t]{.45\linewidth}
  \centering
    \includegraphics[width=.9\linewidth]{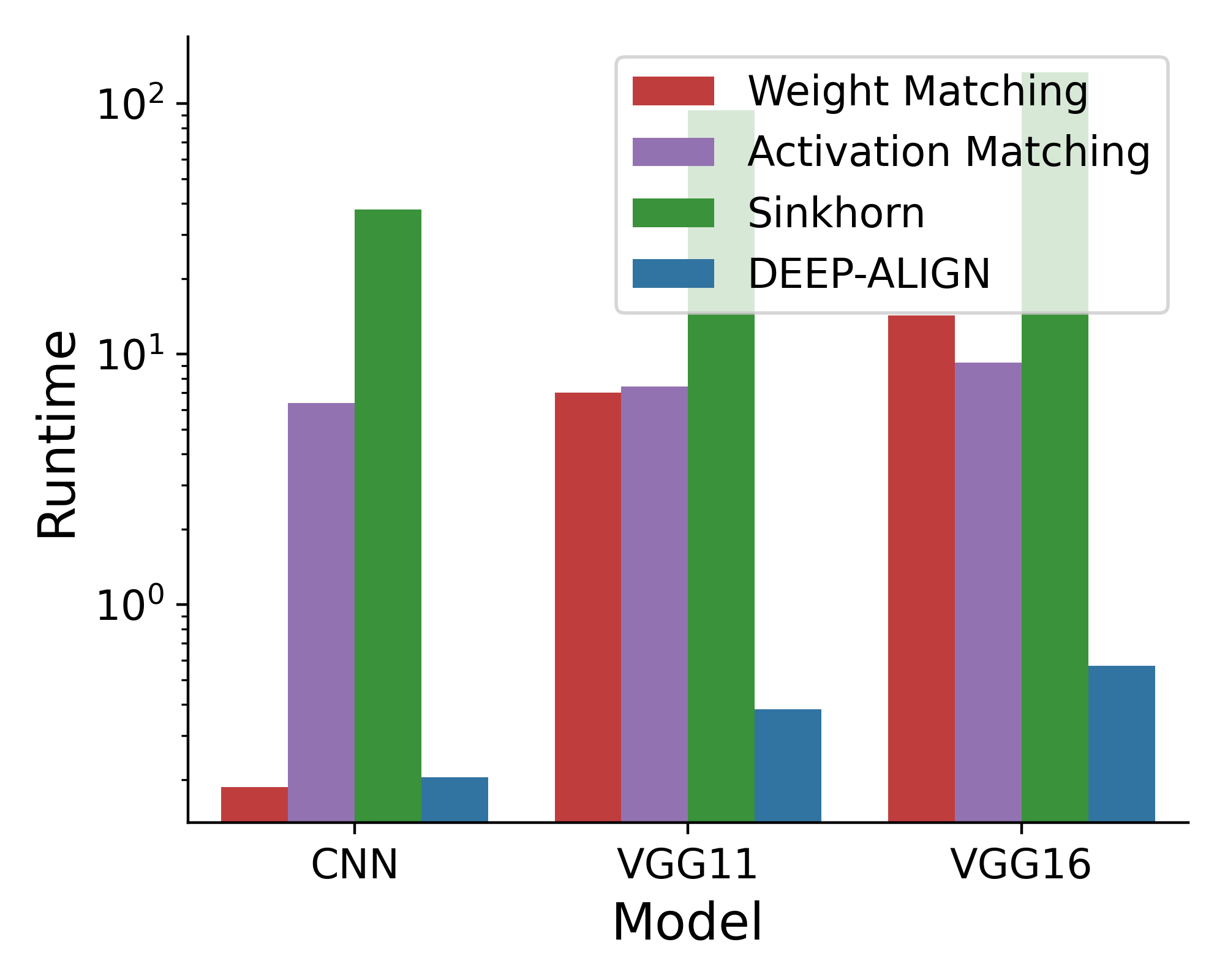}
    \caption{\textit{Runtime comparison}: \ourmethod{} is significantly more efficient at inference compared baseline methods.}
    \label{fig:runtime}
  \end{minipage}
\end{figure*} 

\section{Theoretical analysis} \label{sec:analysis}

\textbf{Relation to the activation matching algorithm.}
In this subsection, we prove that our proposed architecture can simulate the activation matching algorithm, a heuristic for solving the weight alignment problem suggested in \cite{ainsworth2022git}. In a nutshell, this algorithm works by evaluating two neural networks on a set of inputs and finding permutations that align their activations by solving a linear assignment problem using the outer product matrix of the activations as a cost matrix for every layer  $m=1,\dots,M-1$.

\begin{proposition}(\ourmethod\ can simulate activation matching) \label{prop:activation_match}
For any compact set $K\subset \mathcal{V} $ and  $x_1,\dots,x_N\in \R^{d_0}$,  there exists an instance of our architecture $F$ and weights $\theta$ such that for any $v,v'\in K$ for which the activation matching algorithm has a single optimal solution  $g\in G$ and another minor assumption specified in the appendix,    $F(v,v';\theta)$ returns $g$. 
\end{proposition}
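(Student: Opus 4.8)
The plan is to show that each of the four building blocks of $F$ — $F_{DWS}$, $F_{\mathcal{V}\to\mathcal{A}}$, $F_{prod}$, $F_{proj}$ — can be instantiated with specific weights so that the composition reproduces the steps of the Activation Matching algorithm. The key observation is that Activation Matching does three things: (1) for each network, it pushes the input data $x_1,\dots,x_N$ through the network and records the activations $x_m^{(i)}\in\R^{d_m}$ at every intermediate layer $m\in[1,M-1]$; (2) it forms, for each layer $m$, the $d_m\times d_m$ cost matrix $C_m$ whose $(i,j)$ entry is $\sum_{n} (x_m^{(i,n)})(x'^{(j,n)}_m)$, i.e. the Gram/outer-product matrix of the two activation clouds; and (3) it solves a linear assignment problem per layer with cost $C_m$. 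Step (3) is exactly what $F_{proj}$ does at inference time (the Hungarian algorithm on the output matrices), so the real content is realizing steps (1) and (2) with the first three blocks.

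First I would use the universality / expressive-power results for DWSNets from \cite{navon2023equivariant}: since $F_{DWS}$ is a Siamese DWSNet and DWSNets can approximate, uniformly on compact sets, any continuous equivariant function of the weights — in particular the "forward pass" map that sends $v$ to the collection of activation vectors $(x_m^{(n)}(v))_{m,n}$ evaluated at the fixed inputs $x_1,\dots,x_N$ — I can choose the DWSNet weights so that $\mathcal{E}(v)$ encodes (to arbitrary accuracy on the compact set $K$) these activations, laid out so that the bias slots $\mathcal{B}_m^d$ of the output weight-space vector hold the $N$ activation values at layer $m$ stacked along the $d$ feature channels. This is the step where I would lean hardest on an already-proven external theorem, and I would need to check that the forward-pass-to-activations map is indeed continuous and $G$-equivariant in the required sense (it is: permuting the hidden neurons permutes the activations the same way, which is precisely the $G$-action on $\mathcal{A}$). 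Then $F_{\mathcal{V}\to\mathcal{A}}$, which just reads off the bias component, extracts exactly $a_{m,i}=(x_m^{(i,1)},\dots,x_m^{(i,N)})\in\R^N=\R^d$ (setting the channel count $d=N$), and $a'_{m,j}$ likewise for $v'$.

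Next, $F_{prod}$ computes $\phi([a_{m,i},a'_{m,j}])$ entrywise. The paper's default choice is $\phi(a,b)=s^2\langle a/\|a\|_2, b/\|b\|_2\rangle$; with a suitable trainable scalar $s$ this is a positive multiple of the normalized inner product of the two activation vectors. The Activation Matching cost matrix uses the un-normalized inner product $\langle a_{m,i},a'_{m,j}\rangle$, so there is a small gap: normalization rescales rows/columns. This is where the "another minor assumption specified in the appendix" enters — I expect it is an assumption that the optimal assignment is unchanged under this per-neuron normalization (e.g. activation norms are bounded away from zero on $K$, or more precisely that the LAP solution on the normalized Gram matrix coincides with that on the raw one), or alternatively one takes $\phi$ to be the raw bilinear form, which is also a valid symmetric $\phi$. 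Either way, under that assumption $F_{prod}$ produces matrices $Q_m$ whose argmax-correlation permutation is the Activation Matching solution $g_m$. Finally, at inference time $F_{proj}$ returns $P_m=\argmax_{P\in S_{d_m}}\langle Q_m,P\rangle$, which by hypothesis (single optimal solution $g\in G$) equals $g_m$; assembling over $m$ gives $F(v,v';\theta)=g$.

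**The main obstacle** I anticipate is controlling the approximation errors uniformly over the compact set $K$: the DWSNet only approximates the forward-pass map, so $Q_m$ is only close to the true Gram matrix, and I must argue that a sufficiently good approximation does not change the discrete output of the LAP. Because Activation Matching is assumed to have a \emph{unique} optimal solution $g$, the optimal assignment is strictly separated from all competitors by some margin $\delta>0$ depending on $(v,v')$; by compactness of $K$ and continuity of the activations in $v$, this margin can be lower-bounded uniformly on $K$ (this is plausibly the real role of "compact $K$" plus the minor assumption), so choosing the DWSNet accurate to within that margin makes the argmax exact everywhere on $K$ simultaneously. Handling the normalization in $\phi$ cleanly, and stating the margin/uniformity argument rigorously, are the two places where care is needed; the rest is assembling equivariant components that the preceding propositions already certify.
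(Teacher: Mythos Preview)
Your approach is essentially the paper's: use the DWSNet approximation result (Lemma~G.2 of \cite{navon2023equivariant}) to place the layerwise activations on $x_1,\dots,x_N$ into the bias channels, read them off via $F_{\mathcal{V}\to\mathcal{A}}$, take outer products in $F_{prod}$ to recover the Activation Matching cost matrices, and solve the LAP in $F_{proj}$. Two clarifications are worth making.

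First, your guess about the ``minor assumption'' is off. It is not about the normalization in $\phi$; the paper simply instantiates $\phi$ as the raw bilinear form (which you correctly note is a legitimate symmetric choice). The actual assumption is what the paper calls \emph{$\epsilon$-friendliness}: for a fixed $\epsilon>0$, each cost matrix $Z_m$ satisfies $\langle Z_m,P^\star\rangle > \langle Z_m,P\rangle + \epsilon$ for every competing permutation $P\neq P^\star$. That is, the uniform LAP margin is \emph{assumed as a hypothesis}, not derived.

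Second, and relatedly, your compactness argument for the uniform margin does not go through as stated. The set of pairs $(v,v')\in K\times K$ on which Activation Matching has a unique optimum is open, not closed, so the pointwise margin $\delta(v,v')>0$ need not be bounded away from zero on it; compactness of $K$ alone does not rescue this. This is precisely why the paper imposes $\epsilon$-friendliness directly. Once that is granted, the stability of the LAP under perturbation follows from an elementary lemma (the paper's Lemma~\ref{lemma:eps}): the argmax of finitely many continuous functions is locally constant on the set where the maximizer beats all others by $\epsilon$. Choosing the DWSNet approximation accurate to within the corresponding $\delta$ then makes the discrete output exact, as you outlined.
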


This result offers an interesting interpretation of our architecture: the architecture can simulate activation matching while optimizing the input vectors $x_1,\dots,x_N$ as a part of their weights $\theta$.

\textbf{Exactness.}
We now discuss the \emph{exactness} of our algorithms. An alignment algorithm is said to be exact on some input $(v,v')$ if it can be proven to successfully return the correct minimizer $\g(v,v') $. For NP-hard alignment problems such as weight alignment, exactness can typically be obtained when restricting it to `tame' inputs $(v,v')$. Examples of exactness results in the alignment literature can be found in \cite{aflalo2015convex,dym2017exact,dym2018exact}. The following proposition shows that (up to probability zero events) when $v,v'$ are exactly related by some $g\in G$, our algorithm will retrieve $g$ \emph{exactly}:

\begin{proposition}[\ourmethod\ is exact for perfect alignments]\label{prop:always}
Let $F$ denote the $\ourmethod{}$ architecture with non-constant analytic activations and $d\geq 2$ channels. Then, 
for Lebesgue almost every $v\in \mathcal{V}$ and parameter vector $\theta$, and for every $g\in G$, we have that  $F(v,\ghash v,\theta)=g $.\end{proposition}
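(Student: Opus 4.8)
The plan is to first use the equivariance of $F$ to reduce the statement to the case of ``diagonal'' inputs $(v,v)$, then to unfold the four blocks of $F$ on such inputs and reduce everything to a genericity claim, which is finally settled by the principle that a real-analytic function on a connected open set either vanishes identically or vanishes only on a Lebesgue-null set. First I would invoke the $H$-equivariance of $F$ (the proposition immediately preceding the statement): applying it to the pair $(v,\ghash v)$ with the group element acting only on the second argument expresses $F(v,\ghash v,\theta)$ in terms of $g$ and $F(v,v,\theta)$. Hence it suffices to prove that, for Lebesgue-almost every $(v,\theta)$, the network returns the identity alignment on the diagonal pair, i.e. $F(v,v,\theta)=(I,\dots,I)\in G$; the claimed formula for every $g\in G$ then follows.

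\textbf{Unfolding $F$ on the diagonal.}
Working at inference time, $F_{proj}$ is the linear-assignment (Hungarian) step. Since $F_{DWS}$ and $F_{\mathcal{V}\rightarrow\mathcal{A}}$ are Siamese, on input $(v,v)$ both branches produce the \emph{same} activation-space feature $a=a(v,\theta)$, with rows $a_{m,i}\in\R^{d}$. Consequently the $m$-th output of $F_{prod}$ is the symmetric matrix $Q_m$ with
$$(Q_m)_{i,j}=\phi(a_{m,i},a_{m,j})=s^2\,\langle a_{m,i}/\|a_{m,i}\|_2,\ a_{m,j}/\|a_{m,j}\|_2\rangle .$$
Its diagonal entries all equal $s^2$, and by Cauchy--Schwarz its off-diagonal entries are at most $s^2$, with equality in entry $(i,j)$ exactly when $a_{m,i}$ and $a_{m,j}$ are positively proportional. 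Therefore $\langle Q_m,P\rangle\le d_m s^2$ for every $P\in S_{d_m}$, with equality at $P=I$; moreover $I$ is the \emph{unique} maximizer provided $s\neq 0$ and no two distinct rows $a_{m,i},a_{m,j}$ are positively proportional (which, when $d_m\ge 2$, also forces $a_{m,i}\neq 0$ so that $Q_m$ is well defined). Under these conditions $F_{proj}$ outputs $I$ in every block, i.e. $F(v,v,\theta)=(I,\dots,I)$.

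\textbf{The genericity reduction.}
It remains to show that the ``bad set''
$$B=\{s=0\}\ \cup\ \bigcup_{m=1}^{M-1}\ \bigcup_{i\neq j}\ \{(v,\theta): a_{m,i}(v,\theta)\text{ and }a_{m,j}(v,\theta)\text{ are linearly dependent}\}$$
has measure zero. Because $\sigma$ is non-constant analytic, $a(v,\theta)$ is real-analytic in $(v,\theta)$ on the connected Euclidean space $\mathcal{V}\times(\text{parameter space})$. Linear dependence of two vectors in $\R^{d}$ with $d\ge 2$ is the simultaneous vanishing of the $2\times2$ minors of the $d\times2$ matrix $[\,a_{m,i}\mid a_{m,j}\,]$, each of which is analytic, and $\{s=0\}$ is a hyperplane. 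By the analytic-null-set principle, $B$ is null provided that for every $m$ and every $i\neq j$ at least one such minor is not identically zero --- equivalently, that there exists a single pair $(v_0,\theta_0)$ for which $a_{m,i}(v_0,\theta_0)$ and $a_{m,j}(v_0,\theta_0)$ are linearly independent. For every $(v,\theta)\notin B$ the previous paragraph gives $F(v,v,\theta)=(I,\dots,I)$, which completes the proof modulo this existence claim.

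\textbf{The main obstacle: exhibiting a witness.}
Constructing $(v_0,\theta_0)$ is where both hypotheses ($d\ge 2$ channels and non-constant analytic $\sigma$) are genuinely needed, and it is the only delicate point. The idea is that a DWSNet can realize, as a special case of its equivariant layers, a map that processes the bias coordinate $b_{m,i}$ by applying a pointwise nonlinearity \emph{independently in each of the $d$ channels}: choose $\theta_0$ so that (up to harmless rescalings) the $c$-th channel of $a_{m,i}$ equals $\sigma(\alpha_c b_{m,i}+\beta_c)$, with the pairs $(\alpha_c,\beta_c)$ distinct for $c=1,2$. Since $\sigma$ is non-constant analytic, $t\mapsto\sigma(\alpha_1 t+\beta_1)$ and $t\mapsto\sigma(\alpha_2 t+\beta_2)$ are linearly independent functions, so one can pick input biases $b_{m,i}\neq b_{m,j}$ at which $(\sigma(\alpha_c b_{m,i}+\beta_c))_{c=1}^{d}$ and $(\sigma(\alpha_c b_{m,j}+\beta_c))_{c=1}^{d}$ are linearly independent; the remaining coordinates of $v_0,\theta_0$ are arbitrary. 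Carrying this out inside the precise DWSNet layer parameterization while respecting the Siamese/equivariant structure is the part that requires care; I expect the appendix either performs exactly this explicit construction or instead invokes an expressivity/universality property of DWSNets to supply the witness. Everything else in the argument is Cauchy--Schwarz plus the analytic-null-set lemma.
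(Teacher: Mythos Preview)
Your argument is correct and follows essentially the same route as the paper: reduce via $H$-equivariance to showing $F(v,v,\theta)=(I,\dots,I)$ almost everywhere, use Cauchy--Schwarz on the cosine matrix $Q_m$ to see that $I$ is the unique Hungarian maximizer whenever no two rows $a_{m,i},a_{m,j}$ are (positively) proportional, and conclude by the analytic-null-set principle once a single witness $(v_0,\theta_0)$ is exhibited.

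The only substantive difference is the witness construction. The paper's choice is simpler than yours: it picks DWSNet parameters so that every affine layer zeros out the hidden weight matrices and sets the bias channels to $b'_{m,i,1}=b_{m,i,1}$ and $b'_{m,i,c}=1$ for $c\neq 1$. After $D$ layers this yields $a_{m,i}=(\rho^{D}(b_{m,i}),1,\dots,1)$, so the $2\times2$ minor of $[a_{m,i}\mid a_{m,j}]$ in channels $1,2$ equals $\rho^{D}(b_{m,i})-\rho^{D}(b_{m,j})$, and one only needs that $\rho^{D}$ is non-constant (immediate from $\rho$ non-constant analytic). This sidesteps the per-channel affine construction you sketch, which is valid in spirit but---as you note---requires more care to realize inside a multi-layer DWSNet; the paper's ``one informative channel, rest constant'' trick is the cleaner way to produce the witness.
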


\section{Experiments}

In this section, we evaluate \ourmethod{} on the task of aligning and merging neural networks. 
To support future research and the reproducibility of our results, we made our source code and datasets publicly available at: 
\url{https://github.com/AvivNavon/deep-align}.


\textbf{Evaluation metrics.} We use the standard evaluation metrics for measuring model merging~\citep{ainsworth2022git,pena2023re}: Barrier and Area Under the Curve (AUC). For two inputs $v,v'$ the Barrier is defined by $\max_{\lambda\in [0,1]} \psi(\lambda)\equiv \mathcal{L}(\lambda v + (1-\lambda)v') - (\lambda \mathcal{L}(v)+(1-\lambda)\mathcal{L}(v'))$ where $\mathcal{L}$ denote the loss function on the original task. Similarly, the AUC is defined as the integral of $\psi$ over $[0,1]$. Lower is better for both metrics. Following previous works~\citep{ainsworth2022git,pena2023re}, we bound both metrics 
by taking the maximum between their value and zero. 

\textbf{Compared methods.} We compare the following approaches: (1) \textit{Naive}: where two models are merged by averaging the models' weights without alignment. The (2) \textit{Weight matching} and  (3) \textit{Activation matching} approaches proposed in~\cite{ainsworth2022git}. (4) \textit{Sinkhorn}~\citep{pena2023re}: This approach directly optimizes the permutation matrices using the task loss on the line segment between the aligned models (denoted $\mathcal{C}_{Rnd}$ in~\cite{pena2023re}). \revision{(5) \textit{WM + Sinkhorn}: using the weight matching solution to initialize the Sinkhorn method.} (6) \ourmethod: Our proposed method described in Section~\ref{sec:method}. (7) \emph{\ourmethod{} + Sinkhorn}: Here, the output from the \ourmethod{} is used as an initialization for the \textit{Sinkhorn} method. 

\textbf{Experimental details.} Our method is first trained on a dataset of weight vectors \ef{need more details, where do these weights come from, trained vs randomly sampled etc}\an{all details in appendix E due to space constraints} and then applied to unseen weight vectors at test time, as is standard in learning setups. In contrast, baseline methods are directly optimized using the test networks. For the \emph{Sinkhorn} and \emph{\ourmethod{} + Sinkhorn} methods, we optimize the permutations for 1000 iterations. For the \emph{Activation Matching} method, we calculate the activations using the entire train dataset.
We repeat all experiments using 3 random seeds and report each metric's mean and standard deviation. For full experimental details see Appendix~\ref{app:exp_details}.


    

\begin{figure*}[t]
    \centering

    \begin{subfigure}[Sine Wave INRs.]{
    \includegraphics[width=0.4\linewidth]{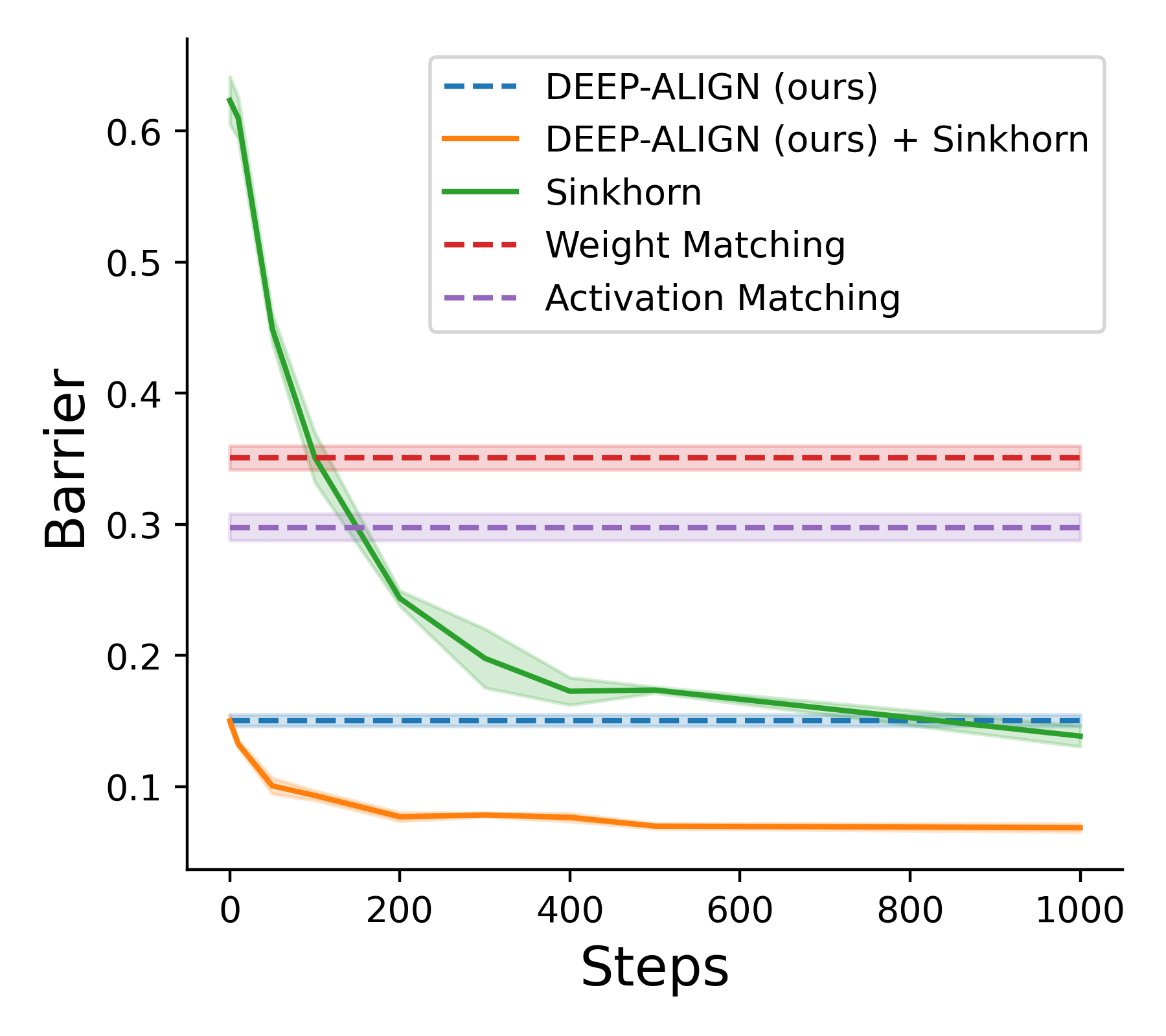}
    \label{fig:sine_inrs}
    }
  \end{subfigure}%
  \hspace*{10pt}   
  \begin{subfigure}[CIFAR10 INRs.]{
    \includegraphics[width=0.4\linewidth]{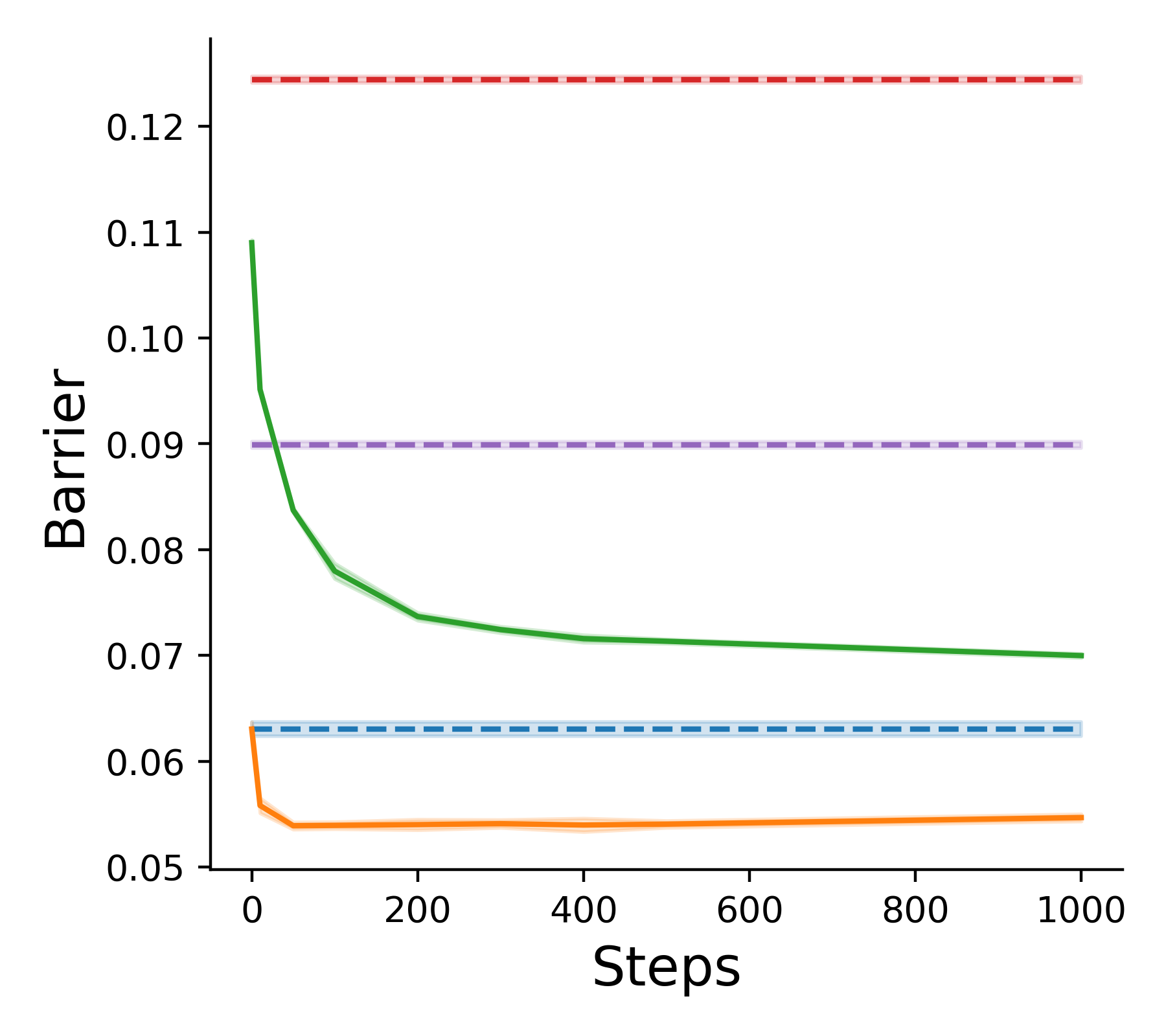}
     \label{fig:cifar10_inrs}
     }
  \end{subfigure}
    
    \caption{\textit{Aligning INRs}: The test barrier vs. the number of Sinkhorn iterations ( relevant only for \textit{Sinkhorn} or \textit{\ourmethod{} + Sinkhorn}), using (a) sine wave and (b) CIFAR10 INRs.  \ourmethod{}  outperforms baseline methods or achieves on-par results. 
    }
    \label{fig:inrs}
\end{figure*}

\subsection{Results}

\textbf{Aligning classifiers.}\label{sec:align_cls} Here, we evaluate our method on the task of aligning image classifiers. We use six network datasets. Two datasets consist of MLP classifiers for MNIST and CIFAR10, and four datasets consist of CNN classifiers trained using CIFAR10 and STL10. This collection forms a diverse benchmark for aligning NN classifiers. The results are presented in Figure~\ref{fig:align_cls}, Table~\ref{tab:align_cls_cnn} and Table~\ref{tab:align_cls_mlp}.
The alignment produced through a feed-forward pass with \ourmethod{} performs on par or outperforms all baseline methods. Initializing the Sinkhorn algorithm with our alignment (\ourmethod{} + Sinkhorn) further improves the results, and significantly outperforms all other methods. \revision{For the CNN alignment experiments, we report the averaged alignment time using 1K random pairs (Figure~\ref{fig:runtime}), and show that \ourmethod{} is significantly more efficient compared to all baselines when aligning large networks. Importantly, we fix the number of training examples for all convolution network architectures (CNN, VGG11, and VGG16). Our results demonstrate that \ourmethod{} scales well to large networks without the need to increase the dataset size. Furthermore, in Figure~\ref{fig:cifar10_n_samples} we provide barrier results for training our method with a varying number of training networks. \ourmethod{} produces alignments with on-par quality to the Sinkhorn method, with only $\sim 100$ training samples.}

\textbf{Aligning INRs.} We use two datasets consisting of implicit neural representations (INRs). The first consists of Sine waves INRs of the form $f(x)=\sin(ax)$ on $[-\pi,\pi]$, where $a\sim U(0.5, 10)$, similarly to the data used in ~\cite{navon2023equivariant}. We fit two views (independently trained weight vectors) for each value of $a$ starting from different random initializations. The task is to align and merge the two INRs. We train our network to align pairs of corresponding views. 


%
The second dataset consists of INRs fitted to CIFAR10 images. We fit five views per image. The results are presented in Figure~\ref{fig:inrs}. \ourmethod{}, performs on par or outperforms all baseline methods. Moreover, using the output from the \ourmethod{} to initialize the Sinkhorn algorithm further improves this result, with a large improvement over the Sinkhorn baseline with random initialization.

\begin{table*}[t]
\small
\centering
     \caption{Federated learning using the CIFAR10 and STL10 dataset, with a varying number of clients.}
    \vskip 0.11in
\resizebox{\textwidth}{!}{
\begin{tabular}{lccccccc}
\toprule
 &  \multicolumn{3}{c}{CIFAR10} &  \multicolumn{3}{c}{STL10} \\
 \cmidrule(lr){2-4} \cmidrule(lr){5-7}\\
 & 50 & 100 & 200 & 10 & 25 & 50 \\
\midrule
Naive (FedAvg) & $67.87 \pm 2.71$ & $63.15 \pm 2.08$ & $59.92 \pm 1.66$ & $46.82\pm 2.84$ & $47.08\pm 0.51 $ & $43.51\pm 1.82$\\
Weight Matching & $68.05 \pm 2.35$ & $63.38 \pm 2.59$ & $ 59.23 \pm 1.29$ & $47.73\pm 1.75$ & $46.87 \pm 0.73$ & $44.44\pm 1.02$\\
Sinkhorn-$L_2$ & $67.80 \pm 3.49$ & $63.11 \pm 2.11$ & $60.22 \pm 1.92$	& $48.01 \pm 0.72$ & $46.10 \pm 0.84$ & $45.33 \pm 0.95$ \\
\midrule
\ourmethod{} & $\mathbf{69.86 \pm 1.13}$ & $\mathbf{66.31 \pm 1.19}$ & $ \mathbf{65.52 \pm 1.78}$ & $\mathbf{49.48\pm 2.23}$ & $\mathbf{48.22\pm 1.08}$ & $\mathbf{48.14\pm 0.88}$ \\
\bottomrule
\end{tabular}
}
\label{tab:federated}
\end{table*}

\paragraph{Aligning networks for Federated Learning.} \revision{\ourmethod{} allows to perform multiple alignments efficiently at inference. Federated learning~(FL, \citet{mcmahan2017communication}) is one setup in which the need for numerous alignments arises. In FL, the goal is to construct a unified model from multiple networks trained on separate and distinct datasets. One of the most frequently used approach for FL is FedAvg~\citep{mcmahan2017communication}, where the local models are averaged in weight space to form a joint global model. Here, we evaluate a version of FedAvg in which we align the local models before averaging them. This approach was shown beneficial in recent works~\cite{wang2020federated}. 
We use the CIFAR10 and STL10 datasets with varying federation sizes (number of clients). To simulate a realistic scenario, we employ a pretrained \ourmethod{} network trained using an OOD dataset, e.g., we train \ourmethod{} on STL10 for the CIFAR10 experiment and vice versa.
Importantly, Sinkhorn and activation matching methods are inapplicable in this setup since no data is available on the server on which the model averaging is performed. Moreover, even if some data is available, the runtime of these methods makes them impractical for the FL setup (Figure~\ref{fig:runtime}). However, the Sinkhorn-$L_2$ method with $L_2$-loss on the relaxed alignment objective of Eq.~\ref{eq:wm_problem}, can be applied in this setup. The results presented in Table~\ref{tab:federated} show \ourmethod{} outperforms the Naive (i.e., FedAvg) and weight-matching baselines. Additionally, we provide results for merging models trained using disjoint data splits in Appendix~\ref{app:extra-exp}.
}

\section{Conclusion}
\label{sec:conclusion}

\textbf{Limitations.} One limitation of our approach is the need for pretraining a network. We show, however, that this process requires only a small number of networks and does not require any labeled examples. Another limitation is that the current architecture is specific for a given input network architecture. This limitation relates to the specific weight-space encoder we utilize in this work, DWSNet. This could be mitigated by modifying the DWS encoder (see Section~\ref{app:extra-exp}), or by replacing it with more recent GNN-based weight space encoders like~\citet{zhang2023neural,lim2023graph}. Such approaches show promising results for training and generalizing over diverse network types.

\textbf{Summary.} We investigate the challenging problem of weight alignment in deep neural networks. The key to our approach, \ourmethod{}, is an equivariant architecture that respects the natural symmetries of the problem. \ourmethod{} is the first architecture designed for weight alignment.
At inference time, our method aligns unseen network pairs without the need to perform expensive optimization. \ourmethod{}, performs on par or outperforms optimization-based approaches while significantly reducing the runtime or improving the quality of the alignments. Furthermore, we demonstrate that the alignments of our method can be used to initialize optimization-based approaches.

\section*{Impact Statement}
This paper presents work whose goal is to advance the field of Machine Learning. There are many potential societal consequences of our work, none which we feel must be specifically highlighted here.

\section*{Acknowledgements}
HM is the Robert J. Shillman Fellow and is supported by the Israel Science Foundation through a personal grant (ISF 264/23) and an equipment grant (ISF 532/23). This study was also funded by a grant to GC from the Israel Science Foundation (ISF 737/2018), by an equipment grant to GC and Bar-Ilan University from the Israel Science Foundation (ISF 2332/18), and by the Israeli Ministry of Science, Israel-Singapore binational grant 207606.

\bibliography{ref}
\bibliographystyle{icml2024}

\newpage
\appendix
\onecolumn

\section{More previous work} \label{sec:more_prev}
\textbf{Weight space alignment.} 
Several algorithms have been proposed for weight-alignment  \citep{tatro2020optimizing, ainsworth2022git, pena2023re, akash2022wasserstein}. \citet{ainsworth2022git} presented three algorithms: Activation Matching, Weight Matching, and straight-through estimation. \citet{pena2023re} improved upon these algorithms by incorporating a Sinkhorn-based projection method. In part, these works were motivated by studying the loss landscapes of deep neural networks. It was conjectured that deep networks exhibit a property called \emph{linear mode connectivity}: for any two 
trained weight vectors,  a linear interpolation between the first vector and the optimal alignment of the second,  yields very small increases in the loss 
\citep{entezari2022the,garipov2018loss,draxler2018essentially,freeman2016topology,tatro2020optimizing,jordan2022repair}.

\textbf{Weight-space networks.} 
A growing area of research focuses on applying neural networks to neural network weights. Early methods proposed using simple architectures such as MLPs or transformers to predict test errors or the hyperparameters that were used for training input networks \citep{unterthiner2020predicting,andreis2023set,eilertsen2020classifying}. 

Recently, \cite{navon2023equivariant} presented the first neural architecture that is equivariant to the natural permutation symmetries of (MLP) weight spaces and demonstrated significant performance improvements over previous approaches. 
\revision{
This architecture, called Deep Weight Space Networks (DWSNets), is composed of multiple linear $G$-equivariant layers, which were characterized in \cite{navon2023equivariant}, interleaved with pointwise nonlinearities, such as ReLU functions. In other words, a DWSNets is a function $F:\mathcal{V} \rightarrow \mathcal{V}$ of the following form:
$$F=L_k\circ \sigma \circ L_{k-1}\circ\sigma \dots \circ \sigma \circ L_1,$$
where $L_i$ are linear $G$-equivariant layers and $\sigma:\R \rightarrow \R$ is a nonlinear function applied elementwise. This is similar to the design of many previous equivariant architectures \citep{zaheer2017deep,maron2018invariant}. 
Each such layer $L:\mathcal{V}\rightarrow \mathcal{V}$ takes as input representations of all the weights and biases which we denote as $v\in\mathcal{V}$, $v=[W_m,b_m]_{m\in[M]}$ and outputs new representations based on all the input weights and biases. As common in deep learning architectures, these layers can also handle $d$ dimensional features for each weight and bias, i.e. vectors $v\in \mathcal{V}^d$. After a composition of several such layers, the output weights can be used for the task of interest, or pooled to form a single representation for the input weight space vector. In our case, we use the bias representations produced by the final layer $L_k$, which are essentially representations of the activation space $\mathcal{A}$.}

\cite{zhou2023permutation} proposed a similar approach and an extension to CNN architectures 
, which was later enhanced by the addition of attention mechanisms \citep{zhou2023neural}.  Finally, \cite{zhang2023neural,lim2023graph} proposes modeling the neural networks as computational graphs and applying Graph Neural Networks to them, demonstrating very good results on several weight space learning tasks.

\textbf{Learning for combinatorial optimization.} 
There exists a large body of research on learning to solve hard combinatorial optimization problems such as TSP and SAT \citep{DBLP:journals/corr/abs-2102-09544,bengio2021machine,khalil2017learning,vesselinova2020learning,selsam2018learning}. The key observation behind these works, also shared by the current work, is that even though those problems are computationally intractable in general, there may be efficient methods to solve them for specific problem distributions. In these cases,  machine learning can be used to learn to predict these solutions. Specifically relevant to this work are works that suggested learning to solve the graph matching problem \cite{fey2020deep, zanfir2018deep, yan2020learning, yu2019learning, nowak2017note, nowak2018revised}
which is a similar alignment problem.


\section{Proofs for section \ref{sec:problem}} \label{app:problem_proofs}

\begin{proof} [Proof of Proposition \ref{prop:equi}]
    We write $\g(v,v')=\text{argmin}_{k\in G} E(k,v,v')$ with $E(k,v,v')=\|v-\khash v' \|^2_2$.
First, we note that the minimal value of the optimization problem $\g(gv,gv')$ is equal to the minimal value of $\g(v,v')$, namely, $$\min_{k\in G} E(k,v,v') = \min_{k\in G} E(k,\ghash v,\ghash'v').$$ This is true since  
$$  \min_{k\in G} E(k,\ghash v,\ghash'v')= \min_{k\in G} \|\ghash v-\khash \ghash'v' \|^2_2 = \min_{k\in G} \|v-(g^{-1}kg')_{\#}v' \|^2_2= \min_{k\in G} \| \khash v- v' \|^2_2=$$
and using the fact that $k\mapsto g^{-1}kg'$ is bijective.

Second, we show that if $k^*=\g(v,v')$, or in other words, is a minimizer of $E(k,v,v') $, then $g\cdot k^* \cdot g'^T$ minimizes $E(k,gv,g'v') $ because:
$$E(g\cdot k^* \cdot g'^T,gv,g'v')= \|\ghash v-(g\cdot k^* \cdot g'^T)_{\#}\ghash' v' \|^2_2=\|v-(g^T\cdot  g \cdot k^*\cdot {g'}^T\cdot g')_{\#} v'\|^2=E(k^*,v,v')$$
\end{proof}

\begin{proof} [Proof of Proposition \ref{prop:inv}]
    Similarly to the proof of the previous proposition, we have  $  \min_{k\in G} E(k,v',v)= \min_{k\in G}\| v'-\khash v\|_2^2 = \min_{k\in G}\| \khash ^Tv'-v\|_2^2 =   \min_{k\in G} E(k,v,v') $. Additionally, plugging in $k=\g(v,v')^{T}$ achieves this value: 
    $$E(\g(v,v')^{T},v',v)= \| v' -[\g(v,v')^{T}]_{\#}v\|_2^2 = \| [\g(v,v')]_{\#}v' -v\|_2^2 = E(\g(v,v'),v,v')$$
    
\end{proof}

\begin{proof}[Proof of generalization to other objectives]
We prove the generalization mentioned in the main text. Namely, that the symmetries of the function $\g$, are shared by any function of the form
$\g(v,v')=\text{argmin}_{k\in G} E(v,\khash v')$ providing that the energy function $E$ satisfies
$$E(gv,gv')=E(v,v') \text{ and } E(v,v')=E(v',v), \quad  \forall g\in G, v,v' \in \mathcal{V}.$$

First, we note that the minimal value of the optimization problem $\g(gv,gv')$ is equal to the minimal value of $\g(v,v')$, namely, $$\min_{k\in G} E(v,\khash v') = \min_{k\in G} E(\ghash v,\khash\ghash'v').$$ 

This is true since  
$$  \min_{k\in G} E(\ghash v,\khash\ghash'v')= \min_{k\in G} E( v,\ghash^T\khash\ghash'v') = \min_{k\in G} E( v,(g^{-1}kg')_\# v')  $$
using the fact that $k\mapsto g^{-1}kg'$ is bijective and the $G$-invariance of $E$.

Next, we show that if $k^*=\g(v,v')$, or in other words, is a minimizer of $E(v,\khash v') $, then $g\cdot k^* \cdot g'^T$ minimizes $E(\ghash v,\khash \ghash'v') $  This is because:
$E(\ghash v,(g\cdot k^* \cdot g'^T)_\# \ghash'v')=E(v,\khash^* v') $ using the $G$-invariance of $E$. 

We turn to proving a generalization of \ref{prop:inv}: similarly to the previous argument, we have
\begin{align*}
  \min_{k\in G} E(v',\khash v)&=\min_{k\in G} E(\khash^T v',v) =\min_{k\in G} E(\khash^T v',v) =\min_{k\in G} E(v,\khash^T v')\\
  &=\min_{k\in G} E(v,\khash v') 
  \end{align*} 
  where we used the fact that we can swap the inputs to $E$ and the invariance.

Additionally, plugging in $k=\g(v,v')^{T}$ achieves this value: 
    $$E(v',\g(v,v')_{\#}^{T}v)= E(\g(v,v')_{\#} v',v)= E(v,\g(v,v')_{\#} v')$$
\end{proof}

\section{Proofs for section \ref{sec:analysis}}
\paragraph{Relation to activation matching.} Here we prove Proposition \ref{prop:activation_match}. Let the outer product matrix $Z\in \R^{d_m\times d_m}$ be a cost matrix for the activation matching algorithm. We say that Z is $\eps$-\emph{friendly} if it has the following property: there exists some $i$ such that for all $j\neq i$ $\langle Z,P_i \rangle > \langle Z,P_j \rangle  +\eps$ where $P_i,P_j \in \Pi_{d_m}$. Intuitively this condition means that the cost matrix is bounded away from the set of cost matrices for which the are multiple optimal solutions \footnote{ Formally, one can prove that under any continuous distribution and for any finite number of cost matrices sampled from this distribution, with probability 1, we can find an $\eps$ such that all the cost matrices are $\eps$-friendly.}.  Let us now state Proposition \ref{prop:activation_match} with full details:

\begin{proposition}[Full formulation of Proposition~\ref{prop:activation_match}] \label{prop:activation_match2}
For any compact set $K\subset \mathcal{V}$,  $x_1,\dots,x_N\in  \R^{d_0}$ and for any $\eps>0$,  there exists an instance of our architecture $F$ and weights $\theta$ for that architecture such that for any $v,v'\in K$ for which all the cost matrices used by the activation matching algorithm are $\eps$-friendly, we have that $F(v,v';\theta)$ returns exactly the same solution as the activation matching algorithm. 
\end{proposition}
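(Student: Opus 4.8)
The plan is to construct an explicit instance of the architecture $F=F_{proj}\circ F_{prod}\circ F_{\mathcal{V}\to\mathcal{A}}\circ F_{DWS}$ whose forward pass reproduces, layer by layer, the computation performed by the activation matching algorithm on the fixed inputs $x_1,\dots,x_N$. The key observation is that the activation matching cost matrix at layer $m$ is (up to normalization) the Gram matrix $\sum_{n} x_m^{(n)} (x_m'^{(n)})^\top$ of the layerwise activations of the two networks evaluated on the $x_n$, and that each of these activations is a continuous function of the weight-space vector $v$ (respectively $v'$). So the heart of the argument is an expressivity claim: the Siamese DWSNet block $F_{DWS}$, followed by the bias-extraction map $F_{\mathcal{V}\to\mathcal{A}}$, can be chosen so that its output activation-space vector at layer $m$ approximates, uniformly over the compact set $K$, the true concatenated activations $(x_m^{(1)},\dots,x_m^{(N)})$ of the network $f_v$ on the inputs, placed across the $d=N$ feature channels. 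This is where I would lean on a universal-approximation-type statement for DWSNets: forward evaluation $v\mapsto (f_v(x_1),\dots,f_v(x_N))$ and its intermediate-layer analogues are equivariant continuous maps from the weight space to the activation space, hence approximable to arbitrary uniform accuracy on $K$ by a DWSNet. I would cite the relevant approximation result from \citet{navon2023equivariant} (or sketch the composition of equivariant linear layers and pointwise nonlinearities that simulates one MLP layer) rather than reprove it.

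Next I would handle the $F_{prod}$ block. With the activations sitting in the $N$ channels as above, choosing $\phi(a,b)=s^2\langle a/\|a\|,b/\|b\|\rangle$ with an appropriate scaling $s$ and (if needed) a small modification to use unnormalized inner products, the generalized outer product at layer $m$ produces a matrix $Q_m$ that is a uniformly good approximation to the activation matching cost matrix $Z_m$ (after absorbing normalization constants into $s$ and the construction). Here I need to be slightly careful: the paper's default $\phi$ normalizes each row, so I would either argue the normalization does not change the argmax of $\langle Q_m,P\rangle$ over permutations, or note that $\phi$ is allowed to be "a general (parametric or nonparametric) symmetric function" and pick the plain bilinear form. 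The final step is the projection/assignment block: at inference time $F_{proj}$ solves $\argmax_{P\in S_{d_m}}\langle Q_m,P\rangle$ via the Hungarian algorithm, which is exactly what activation matching does with $Z_m$. The $\eps$-friendliness hypothesis is what lets me pass from "$Q_m$ is close to $Z_m$" to "$Q_m$ has the same optimal permutation as $Z_m$": since $Z_m$ has a unique optimum separated by margin $\eps$ from all other permutations, and $|\langle Q_m,P\rangle-\langle Z_m,P\rangle|$ is uniformly small once the DWSNet approximation error is below a threshold depending only on $\eps$ and $d_m$, the argmax is preserved.

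So the steps, in order: (1) express the activation matching cost matrices as continuous equivariant functions of $(v,v')$ through the layerwise forward evaluation maps; (2) invoke DWSNet universal approximation to realize these forward-evaluation maps, across $N$ channels, by $F_{DWS}$ composed with $F_{\mathcal{V}\to\mathcal{A}}$, uniformly on the compact set $K$ with error $\delta$; (3) choose $\phi$ and the scaling in $F_{prod}$ so that the resulting $Q_m$ is within $O(\delta)$ of $Z_m$ in, say, operator or entrywise norm, uniformly on $K$; (4) choose $\delta$ small enough (as a function of $\eps$ and the $d_m$) that $\eps$-friendliness of every $Z_m$ forces $\argmax_{P}\langle Q_m,P\rangle=\argmax_{P}\langle Z_m,P\rangle$, so the Hungarian step in $F_{proj}$ outputs the activation matching solution on every layer; (5) conclude $F(v,v';\theta)$ equals the activation matching output. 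I expect the main obstacle to be step (2): making precise exactly which approximation guarantee for DWSNets is needed — in particular that intermediate-layer activations, not just the final output, are simultaneously approximable, and that this can be arranged within the finite channel budget $d$ — and cleanly threading the "another minor assumption specified in the appendix" (which I anticipate is precisely the $\eps$-friendliness / unique-optimum condition, possibly together with a nondegeneracy condition ensuring the activations are nonzero so the normalization in $\phi$ is harmless). A secondary nuisance is bookkeeping the normalization constants between the paper's $\phi$ and the raw Gram matrix used by activation matching, but that is routine.
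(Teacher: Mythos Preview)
Your proposal is correct and follows essentially the same route as the paper: invoke the DWSNet approximation result from \citet{navon2023equivariant} (specifically the observation that its proof already yields uniform approximation of \emph{all} intermediate activations, not just the output) to load the layerwise activations into the channels, set $F_{prod}$ to compute plain outer products so that the $Q_m$ uniformly approximate the activation-matching cost matrices $Z_m$, and then use the $\eps$-friendliness margin to conclude that the Hungarian step returns the same permutations. The only cosmetic differences are that the paper packages your step (3) via a composition-of-approximations lemma (Lemma~6 of \citet{lim2022sign}) and your step (4) via a small stability lemma for $\argmax$ on compact sets, whereas you argue both directly; your anticipation that the ``minor assumption'' is precisely $\eps$-friendliness is also correct.
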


\begin{proof}[Proof of Proposition \ref{prop:activation_match2}]
     First, to obtain weights for the DWS network that approximate the activations of both networks on the set of inputs $x_1, \dots, x_n$, we employ Lemma  G.2 from \cite{navon2023equivariant} (stated below for convenience). We note that this lemma is proved by the approximation of all intermediate activations of the input networks so the proof of this lemma trivially shows that there is a DWSnet that outputs an approximation of all the activations.    
     
     Then, we set $F_{prod}$ to calculate outer products in order to generate the outer-product-based cost matrix used in \cite{ainsworth2022git}. It follows that the composition of the DWS network above with the outer product function approximates uniformly their limits, see \cite{lim2022sign} (Lemma 6), and the composition of their limits is exactly the function that takes two weight vectors and computes the cost matrices from \cite{ainsworth2022git}. We have developed an architecture and weights for this architecture that can uniformly approximate the cost matrices used by the activation matching algorithm to any precision. As a final step, we apply our linear assignment projection.  
     
     To show that this architecture will return exactly the same solution $g$ of the activation matching algorithm, we use our $\eps$-friendly assumption and Lemma \ref{lemma:eps} which imply that there is an open ball of radius $\delta$ around each cost matrix in which the linear assignment problem is \emph{constant}. We can now use the uniform approximation result from the previous paragraph to approximate all the cost matrices up to $\delta$ and get that applying the linear assignment problem to the approximated cost matrices yields the same output as the original cost matrices. 
    
\end{proof}

\begin{lemma} \label{lemma:ff}[Lemma G.2 in \cite{navon2023equivariant}]
        Let $\eps>0$. For any  $x_1,\dots,x_N\in \R^{d_0}$ there exist a DWSNet $D_\eps$ and weights $\theta_{\eps}$ such that for any $v\in K$ we have $\|D(v;\theta_\eps)_i- f_v(x_i)\|_2<\eps$.
\end{lemma}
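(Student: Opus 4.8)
The plan is to prove the lemma by an explicit \textbf{layer-by-layer simulation} of the MLP forward pass inside a DWSNet, reading off an approximation of $f_v(x_i)=x_M$ (and, as a byproduct, of every intermediate activation) from the network's output channels. The guiding observation is that, for each fixed input $x_i$, the map $v\mapsto (x_0,\dots,x_M)$ sending a weight vector to the tuple of all activations of $f_v$ on $x_i$ is continuous (it is a composition of the matrix products, bias additions, and pointwise $\sigma$ appearing in \eqref{eq:mlp}) and is \emph{$G$-equivariant}: under $v\mapsto \ghash v$ with $g=(P_1,\dots,P_{M-1})$, the intermediate activation $x_m$ transforms as $P_m x_m$, i.e.\ it sits in the same representation as the bias $b_m$ and hence in the activation slot $\mathcal{A}_m$. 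This is exactly the class of function a DWSNet is designed to realize, and it is what makes the construction below a legal equivariant computation.

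I would first handle the base case. Since the input dimension $d_0$ carries no permutation symmetry, the contraction $W_1\mapsto W_1 x_i$ is an equivariant linear map $\mathcal{W}_1\to \mathcal{B}_1$: the free $d_0$ index is summed against the fixed vector $x_i$, while the surviving $d_1$ index transforms by $P_1$, matching $b_1$. Hence one equivariant linear layer computes $W_1 x_i + b_1$ exactly, and applying the pointwise nonlinearity yields $x_1$, which I store in a bias-type feature channel. To treat all $N$ inputs at once I would run $N$ copies of the whole simulation in parallel feature channels, each hard-coding its own $x_i$ in the first contraction.

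The inductive step is where the \textbf{main difficulty} lies. For $m\ge 2$ the update $x_{m+1}=\sigma(W_{m+1}x_m+b_{m+1})$ needs the product $W_{m+1}x_m$, which is \emph{bilinear} in $v$ (since $x_m$ itself now depends on earlier weights), whereas DWSNet linear layers are linear. I would realize this bilinear contraction as follows: broadcast the stored activation $x_m$ (carried on its column index $k$) across the row index $j$ to form a $\mathcal{W}_{m+1}$-shaped feature whose $(j,k)$ entry is $(x_m)_k$; this is equivariant, since $x_m$ carries the $P_m$ action on its $k$-index and is constant (hence invariant) along the $P_{m+1}$-permuted $j$-index. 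Now two $\mathcal{W}_{m+1}$-shaped channels hold $(W_{m+1})_{jk}$ and $(x_m)_k$, and their elementwise product is approximated to arbitrary accuracy by a small within-slot sub-network $\sum_l c_l\,\sigma(\alpha_l a+\beta_l b+\gamma_l)$ (with $a,b$ the two channel values), invoking scalar universal approximation of the product map by shallow nets with a non-polynomial activation. As channel mixing and pointwise $\sigma$ are both equivariant, the computation stays equivariant. Summing over the column index $k$ is then an equivariant contraction $\mathcal{W}_{m+1}\to\mathcal{B}_{m+1}$; adding $b_{m+1}$ and applying $\sigma$ produces an approximation of $x_{m+1}$ in a bias-type channel, closing the induction. (Note that it is precisely this unavoidable product approximation that forces the lemma to be an $\eps$-statement rather than an exact one.)

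It remains to control the accumulated error. Because $K$ is compact and $\sigma$ is continuous, all weight entries and all intermediate activations $x_m$ arising for $v\in K$ range over one fixed compact set, on which every product approximation above can be made uniformly accurate and on which $\sigma$ is Lipschitz. A telescoping induction over the layers then bounds the final-layer error by a sum of the per-layer approximation errors amplified only by fixed Lipschitz and operator-norm constants; choosing each per-layer approximation fine enough makes this total $<\eps$ uniformly over $v\in K$. Reading the final bias-type channels (or all bias-type channels, for the ``all activations'' version used in Proposition~\ref{prop:activation_match2}) then yields the claimed $D_\eps$ and $\theta_\eps$. The step demanding the most care is the bilinear product together with the uniform-over-$K$ error bookkeeping across the $M$ layers; everything else is either exact or a direct instance of scalar universal approximation.
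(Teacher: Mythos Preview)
The paper does not prove this lemma at all: it is quoted verbatim as Lemma~G.2 of \cite{navon2023equivariant}, and the only additional remark the authors make is that the original proof ``is proved by the approximation of all intermediate activations of the input networks,'' so that one automatically gets approximations of all $x_m$, not just $x_M$. Your proposal is therefore not competing with a proof in the present paper but rather reconstructing the cited one.

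That said, your reconstruction is sound and matches the one hint the paper gives: you build the activations $x_0,\dots,x_M$ layer by layer inside bias-type channels, which is exactly the ``all intermediate activations'' structure the authors allude to. The crucial step you correctly identify is that for $m\geq 1$ the map $(W_{m+1},x_m)\mapsto W_{m+1}x_m$ is bilinear in quantities that both depend on $v$, so it cannot be realized by a single equivariant linear layer; your broadcast-then-approximate-the-scalar-product-then-contract construction is the standard workaround, and your equivariance checks (the broadcast $\mathcal{B}_m\to\mathcal{W}_{m+1}$ is constant along the $P_{m+1}$-index, the row-sum $\mathcal{W}_{m+1}\to\mathcal{B}_{m+1}$ kills the $P_m$-index) are correct. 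The error bookkeeping via compactness of $K$ and Lipschitzness of $\sigma$ on the resulting bounded activation range is routine. Two small points worth making explicit if you write this out in full: (i) the scalar universal-approximation step for $(a,b)\mapsto ab$ requires the DWSNet activation to be non-polynomial, which is implicit in the framework but should be stated; and (ii) you need extra ``pass-through'' channels to carry the original weights $W_{m+1},b_{m+1}$ forward unchanged to the layer where they are consumed, which is easy but should be mentioned.
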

We note that \cite{navon2023equivariant} restricted  $x_1,\dots,x_N$ to some compact set but this assumption is not used in their proof.
\begin{lemma} \label{lemma:eps}
        Let $\eps>0$, $K\subset\R^n$ a compact domain and let $f_i:K\rightarrow \R$, $i=1,\dots,m$ such that $f_i(x)$ are continuous. Let $$S=\{x\in K \mid \exists i~~\text{s.t}~ \forall j\neq i ~ f_i(x)>f_j(x)+\eps \},$$   define $g(x)=\text{argmax}_{j=1}^m f_j(x)$, then there exists some $\delta>0$ such that for any $x\in S$, $g(x)$ is constant in an open ball of radius $\delta$ around $x$.         
\end{lemma}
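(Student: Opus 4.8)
The plan is to reduce the statement to \emph{uniform} continuity of the finitely many functions $f_1,\dots,f_m$ on the compact set $K$. The point of the lemma is that a \emph{single} radius $\delta>0$ works simultaneously for every $x\in S$, and uniform continuity (available because $K$ is compact) is exactly what supplies such a global $\delta$. Pointwise continuity alone would only give a radius depending on $x$, which could shrink as $x$ ranges over $S$; ruling this out is the one genuine subtlety, and it is precisely where compactness is used.

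First I would pin down the dominant index. For $x\in S$ there is an index $i$ with $f_i(x)>f_j(x)+\eps$ for all $j\neq i$, and this $i$ is unique: two distinct dominant indices $i,i'$ would force $f_i(x)>f_{i'}(x)+\eps$ and $f_{i'}(x)>f_i(x)+\eps$ simultaneously, impossible for $\eps>0$. This $i$ is visibly the unique maximizer at $x$, so $g(x)=i$ is well defined on $S$.

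Next I would choose $\delta$. Each $f_i$ is continuous on the compact set $K$, hence uniformly continuous there; taking the minimum over the finite index set $i=1,\dots,m$ of the individual moduli yields one $\delta>0$ such that $x,y\in K$ with $\|x-y\|<\delta$ imply $|f_i(x)-f_i(y)|<\eps/2$ for \emph{every} $i$. Crucially this $\delta$ depends only on $\eps$ and the $f_i$, not on the particular $x\in S$. Then I would propagate the dominance: fixing $x\in S$ with $g(x)=i$ and any $y\in K$ with $\|y-x\|<\delta$, for every $j\neq i$ one has
$$f_i(y)>f_i(x)-\eps/2>f_j(x)+\eps/2>f_j(y),$$
where the two outer inequalities are the $\eps/2$ bounds from uniform continuity and the middle inequality is the $\eps$-dominance $f_i(x)>f_j(x)+\eps$. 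Hence $i$ is still the unique maximizer at $y$, so $g(y)=i=g(x)$, i.e. $g$ is constant on $B(x,\delta)\cap K$.

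I expect the argument to be short and the only point requiring care to be the uniformity just described; the finiteness of the index set (allowing a minimum over $i$) and the compactness of $K$ (upgrading continuity to uniform continuity) together deliver the single $\delta$ that the statement demands. The case $S=\emptyset$ is vacuous.
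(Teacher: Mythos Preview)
Your proposal is correct and follows essentially the same argument as the paper: use uniform continuity of the finitely many $f_i$ on the compact set $K$ to obtain a single $\delta$ with $|f_i(x)-f_i(y)|<\eps/2$ for all $i$ whenever $\|x-y\|<\delta$, then verify that the $\eps$-dominant index at $x$ remains the strict maximizer at every such $y$. Your write-up is in fact slightly more careful than the paper's (you note the uniqueness of the dominant index and the vacuous case $S=\emptyset$), but the core idea and the chain of inequalities are identical.
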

\begin{proof}
        Since $f_1,\dots,f_n$ are finite and continuous on a compact domain, there exists some $\delta>0$ such that for any $x,y\in K$, $\|x-y\|<\delta$  implies $|f_i(x)-f_i(y)|<\frac{\eps}{2}$ for every $i$. Then for any $x\in S$ such that $\forall j\neq i ~ f_i(x)>f_j(x)+\eps$, if $\|x-y\|<\delta$ , for every $j\neq i$ we have  
        $$ f_i(y) - f_j(y) = (f_i(y)-f_i(x)) +(f_i(x)-f_j(x))+(f_j(x) - f_j(y)) >0, $$ which implies that $g(y)=i$.
\end{proof}

    
        

\begin{proof}[Proof of Proposition~\ref{prop:always}] 
As mentioned in the statement of the proposition,  we 
assume that $F$ uses analytic non-constant activation functions, and $d\geq 2$ channels. In the proof we consider our standard choice of $\phi$ as the normalized inner product and set the scaling  parameter $s$ to be $s=1$ for simplicity. We consider the version of $F$ where the last projection layer $F_{proj}$ computes the permutations maximizing the correlation with the outputs $Q_1,\ldots,Q_M $ of the product layer (that is, we consider the version of $F$ used in test time, rather than the version used in training which uses differentiable Sinkhorn iterations). 
By equivariance of the model, it is sufficient to show that for almost every $v,\theta$ the identity matrices are the closest permutations to
$(Q_1,\ldots,Q_{M-1}).$

Next, recall that the indices of each $Q_m=Q_m(v,\theta)$ are given by 
$$Q_{m,i,j}=\frac{\langle a_{m,i},a_{m,j} \rangle}{ \|a_{m,j}\| \|a_{m,j}\|} $$
where $a_{m,i}$ is a $d$ dimensional vector. In particular, we have that $|Q_{m,i,j}| \leq 1 $, and the equality holds if $i=j$. It remains to show that when $i\neq j$ we will have a strict inequality $|Q_{m,i,j}| < 1 $ for all $m$, for almost every $v$ and $\theta$. Equivalently, we will need to  show that for all $m=1,\ldots,M$ and all $i\neq j $, the functions
$$\phi_{m,i}(v,\theta)=\|a_{m,i}\|^2 $$
and 
$$\psi_{m,i,j}(v,\theta)=\|a_{m,j}\|^2+\|a_{m,i}\|^2-\langle a_{m,i},a_{m,j}\rangle $$
are non-zero for almost all $(v,\theta)$. We note that $\phi$ and $\psi$ are analytic, and the zero set of a non-zero analytic function always has Lebesgue measure zero (see \cite{mityagin2015zero}). Therefore it is sufficient to show that there exists a single $(v,\theta)$ for which $\phi_{m,i}(v,\theta)\neq 0$, and a single $(v,\theta)$ for which $\psi_{m,i,j}(v,\theta)\neq 0$. 

Let us consider parameter vectors $\theta$ as follows: recall that the output of each layer is a sequence of hidden weight matrices $W_m=W_{m,a,b,c} $  and a sequence of hidden bias vectors $b_m=b_{m,i,c} $, where the last index $c$ runs over the channels. We choose $\theta$ so that each affine layer will map all matrices $W_m$ to zero, and all bias vectors $b_m $ to a new value $b_m' $, where $b_{m,i,1}'=b_{m,i,1} $ and $b_{m,i,c}'=1$ if $c\neq 1$. Since this describes an affine equivariant mapping, and the linear layers in DWS can express all linear equivariant function, the vector $\theta$ can indeed be defined to give this function.

With this choice of $\theta$, we will obtain 
$$a_{m,i}=\left[ \rho^D(b_{m,i}),1_{d-1} \right]\in \R^d $$
where $\rho$ denotes the activation used in the DWS network, $D$ denotes the depth of the DWS network, and $b_{m,i}$ is the $i$-th entry of the $m$-th input bias vector. In particular, it is an entry of $v$. To conclude the proof it is sufficient to show that we can choose a pair of $b_i,b_j$ such that $\rho^D(b_{m,i})\neq \rho^D(b_{m,j}) $. If this is indeed the case we can immediately deduce that $\psi_{m,i,j}(v,\theta)\neq 0$. 

To prove the latter point all we need is to show that $\rho^D$ is not a constant function.  Indeed, since $\rho$ itself is non-constant its image contains an interval $I$, and by analyticity $\rho$ cannot be constant on this interval, so $\rho^2$ is non-constant. Continuing recursively in this way we can show that $\rho^D$ will not be constant for any $D$. 
\end{proof}

\section{Extending DWSNets to CNNs}

In this work, we employed a DWSNet~\citep{navon2023equivariant} as our $F_{DWS}$ block. Since the study in \cite{navon2023equivariant} focused on MLPs, the original implementation only supports input MLP networks. Here, we provide technical details on the extension to input CNN networks under the DWSNets framework. This requires only two simple adjustments. First, the kernel dimensions are flattened into the feature dimension. Second, the first FC layer (after the last convolution layer) in the input network generally requires special attention. Specifically, denoting the output dimension of the last CNN with $d_0$ and the dimensions of the first FC layer weight matrix by $d_1\times d_2$, we reshape the weight matrix to $d_0 \times d_2\times (d_1 / d_0)$, i.e., folding the $d_1/d_0$ into the feature dimension. This preserves the equivariance to the permutation symmetries of the input network. 


\section{Experimental Details}
\label{app:exp_details}

In all experiments, we use a 4-hidden layer \ourmethod{} network with a hidden dimension of $64$ and an output dimension of $128$ from the $F_{DWS}$ block. We optimize our method with a learning rate of $5e-4$ using the AdamW~\citep{loshchilov2017decoupled} optimizer. For all experiments with image classifiers, we train \ourmethod{} with all objectives as described in Section~\ref{sec:method}  ($\ell_{\text{alignment}},\ell_{\text{LMC}}$ and $\ell_{\text{supervised}}$). For the INR experiments, we drop the $\ell_{\text{alignment}}$ loss since we found adding this loss significantly hurt the performance (see Appendix \ref{app:ablation}).

We use the entire dataset to estimate the activations for the Activation Matching (AM) baseline. For the Sinkhorn baseline, we optimize the permutations using learning rate $1e-1$ and for $1000$ iterations.

When using image datasets, we use the standard train-test split and allocate $10$\% of the training data for validation.

\textbf{MLP classifiers.} For this experiment, we generate two wight datasets, consisting of MNIST and CIFAR10 classifiers. Each classifier is a 3-hidden layer MLP with a hidden dimension of $128$. The input dimension is $784$ for MNIST and $3072$ for CIFAR10. We train the classifiers for $5$ epochs with a batch size of $128$ and learning rate $5e-3$. Both datasets consist of $10000$ networks, split into $8000$ for training and $1000$ each for validation and testing.

We train \ourmethod{} for $25$K iterations. Since the $F_{DWS}$ block can grow large when the input dimension to the input network is large, we employ the method proposed in \cite{navon2023equivariant} to control the number of parameters. Thus, we linearly map the input dimension $784$ or $3072$ (MNIST or CIFAR10) to $8$. See ~\cite{navon2023equivariant} for details.

\begin{wraptable}[13]{r}{4.5cm}
\vspace{-8pt}
\centering
\small
\caption{CNN classifiers architecture.}
\begin{tabular}{c}
\toprule
CNN Classifiers Arch. \\
\hline
3x3 Conv 16\\ 3x3 Conv 32\\ 3x3 Conv 32\\ 2x2 MaxPool\\ 3x3 Conv 64\\ 3x3 Conv 64\\ 2x2 MaxPool\\ 3x3 Conv 128\\ 3x3 Conv 128\\ 2x2 MaxPool \\ Linear (2048, 10) \\
\bottomrule
\end{tabular}
\label{tab:cnn_arch}
\end{wraptable}

\textbf{CNN classifiers.}
For this experiment, we generate four datasets by training classifiers on CIFAR10 and STL10 datasets. 
We generate two datasets of VGG11 (9M parameters) and VGG16 (15M parameters) networks, trained on CIFAR10. Each dataset consists of $4500$ training examples, $100$ networks for
validation, and $100$ for testing.

The CNN datasets consist of CNN networks with 7 convolution layers followed by a fully-connected layer, with a total of 300K parameters. The full architecture is presented in Table~\ref{tab:cnn_arch}. For STL10, we apply a sequence of $3$ augmentations, first, we random crop $64 \times 64$ path, next we resize the patch to $32 \times 32$, and finally we apply random rotation drawn from $U(-20, 20)$.
We train the CIFAR10 and STL10 classifiers for $20$/$100$ epochs respectively using Adam optimizer with $1e-4$ learning rate. We save the model's checkpoint at the final epoch.
Both datasets consist of $5000$ networks, split into $4500$ for training and $250$ each for validation and testing. We train \ourmethod{} for $300$ epochs.


\textbf{Sine INRs.} To generate the Sine wave dataset, we use the same procedure as in~\cite{navon2023equivariant}. Each INR is an MLP with $3$ layers, a hidden dimension of $32$, and Sine activations. The dataset consists of $2000$ sine waves and two INR copies (views) for each sine wave. We use $1800$ waves for training, and $100$ for validation, and testing. 

\textbf{CIFAR10 INRs.} The CIFAR10 dataset consists of $60K$ images. We split the dataset to train, validation, and test with $45K$ / $5K$ / $10K$ samples respectively. For each image, we create $5$ independent INR copies. Each INR is a 5-layer MLP with a $32$ hidden dimension each followed by sine activations. We optimize the INRs using Adam optimizer with a $1e-4$ learning rate for $10K$ update steps. 
We train \ourmethod{} for $300$ epochs.


\begin{figure}
  \begin{center}
    \includegraphics[width=0.5\textwidth]{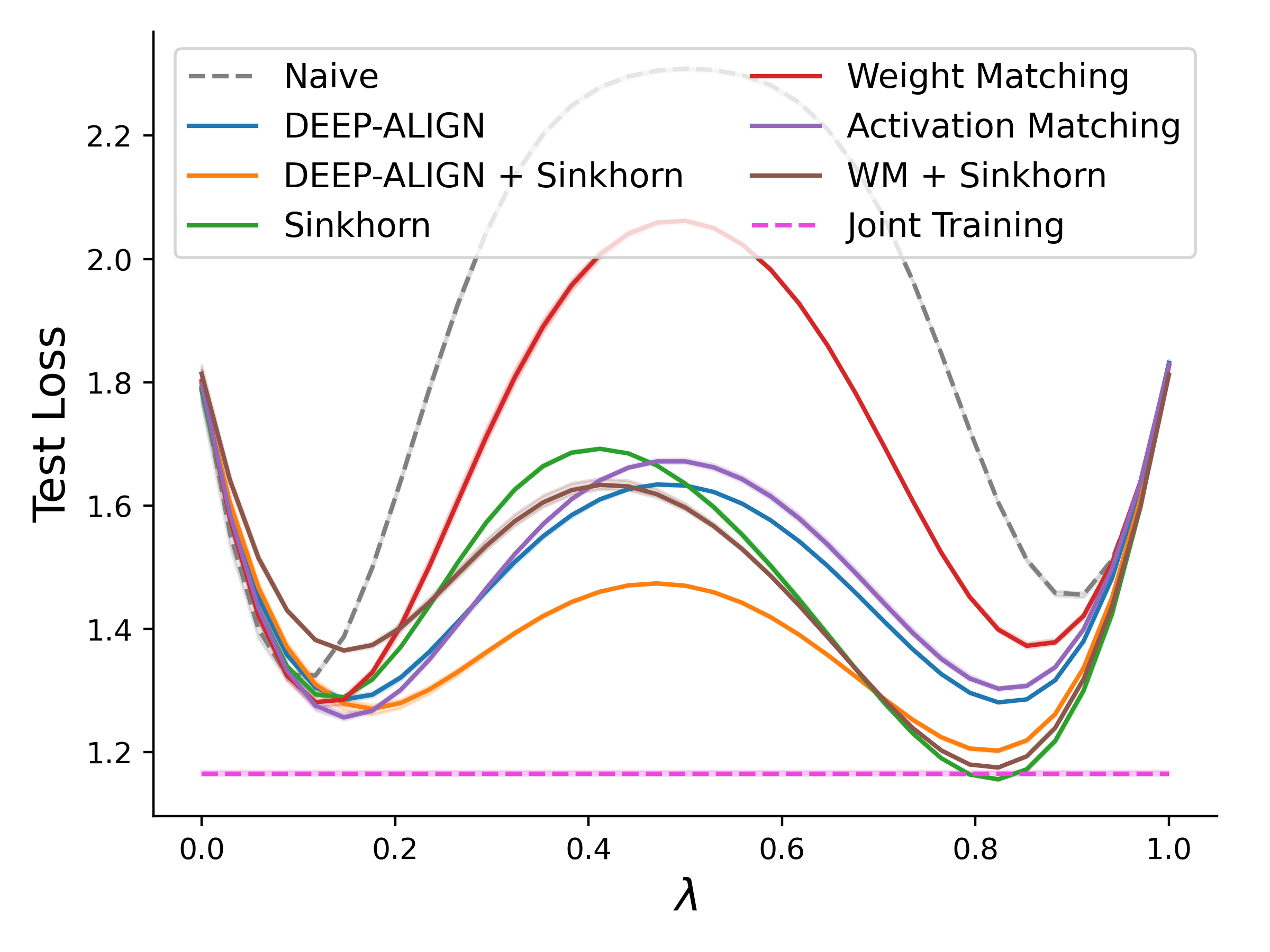}
  \end{center}
  \caption{Merging networks trained on distinct subsets of CIFAR10 with different class distributions.
  }
  \label{fig:disjoint_data}
\end{figure}

\textbf{Federated Learning.} \revision{We use two datasets, mainly CIFAR10 and STL10. For CIFAR10, we vary the number of clients from 50 to 200, and use a \ourmethod{} network trained on STL10 classifiers. For STL10, we vary the number of clients from 10 to 50, and use a \ourmethod{} network trained on CIFAR10 classifiers. For all experiments, we train the joint network for 1000 rounds. We randomly select 5 clients at each round and train the global model for 50 local optimization steps. We then send the local models to the hub for (alignment and) averaging.}

\textbf{Disjoint datasets.} \revision{We use the same CNN network configuration as in the CNN classifiers experiments, and train $2500$ networks for each split ($5000$ in total). Each network is trained for $20$ epochs using the Adam optimizer with learning rate $1e-4$. We allocate $100$ networks from each split for testing and validation, and the remaining $4600$ networks for training. We train \ourmethod{} for $50$K steps.}

\textbf{Time comparison.} Prior methods for weight matching, which rely on optimization, often suffer from exhaustive runtime, which may be impractical for real-time applications. In contrast, once trained, \ourmethod{} is able to produce high-quality weight alignments through a single forward pass and an efficient projection step. 
We compare \ourmethod{} to baselines by measuring the time required to align a pair of models in the CIFAR10 CNN and VGG classifiers datasets, and report the averaged alignment time using 1000 random pairs on a single A100 Nvidia GPU.
The results are presented in Figure~\ref{fig:runtime}. \ourmethod{} is significantly faster than Sinkhorn and Activation Matching while achieving comparable results. Furthermore, \ourmethod{} is on par with Weight Matching w.r.t runtime, yet it consistently generates better weight alignment solutions.

\section{Additional Experimental Results}\label{app:extra-exp}


\begin{figure*}[t]
  \begin{subfigure}[MNIST MLPs.]{
    \includegraphics[width=0.45\linewidth]{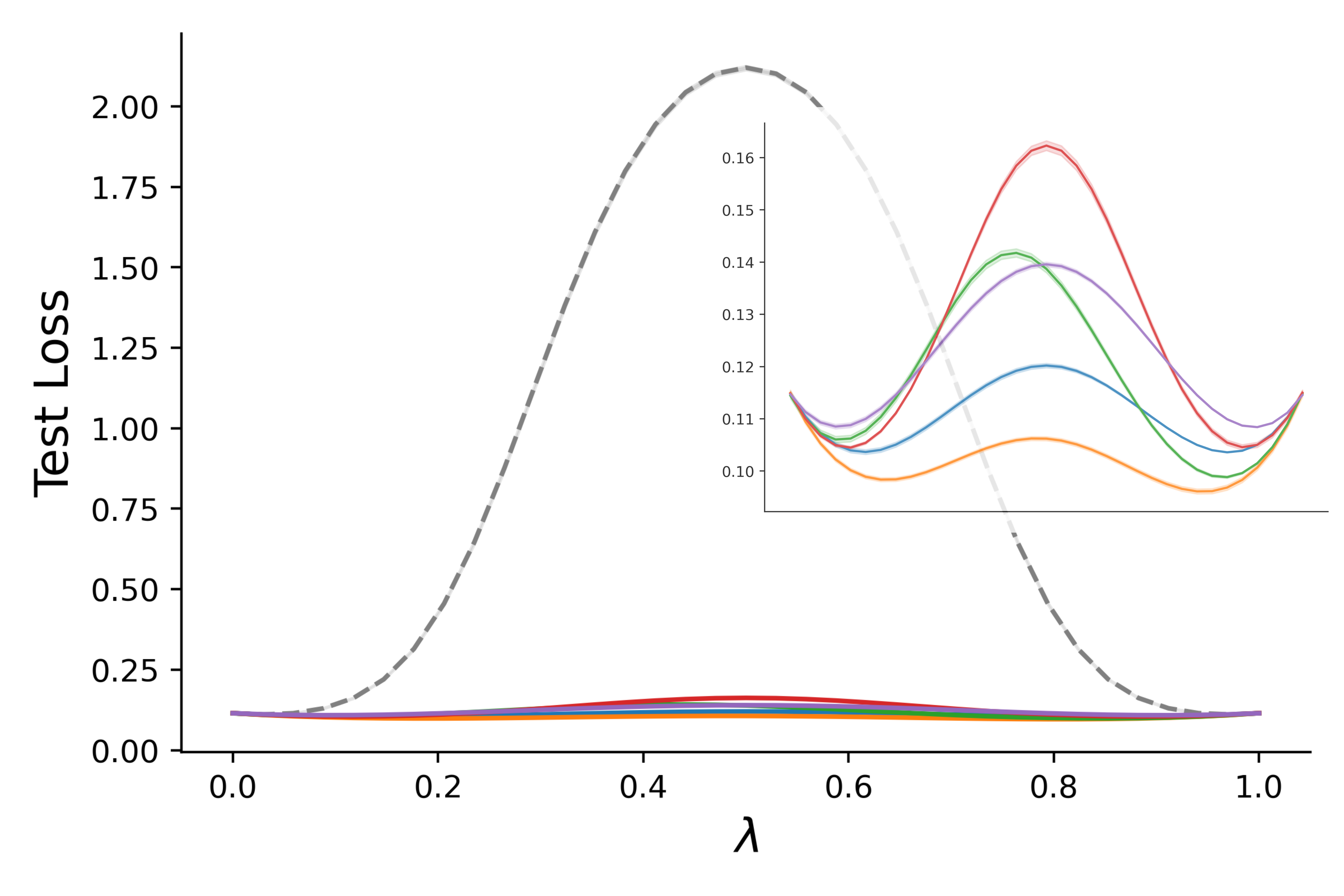}
    }
  \end{subfigure}%
  \hspace*{\fill}   
  \begin{subfigure}[CIFAR10 MLPs.]{
    \includegraphics[width=0.45\linewidth]{figures/cifar10_mlp_lmc.png}}
  \end{subfigure}%
  \hspace*{\fill}   
  \begin{subfigure}[STL10 CNNs.]{
    \includegraphics[width=0.45\linewidth]{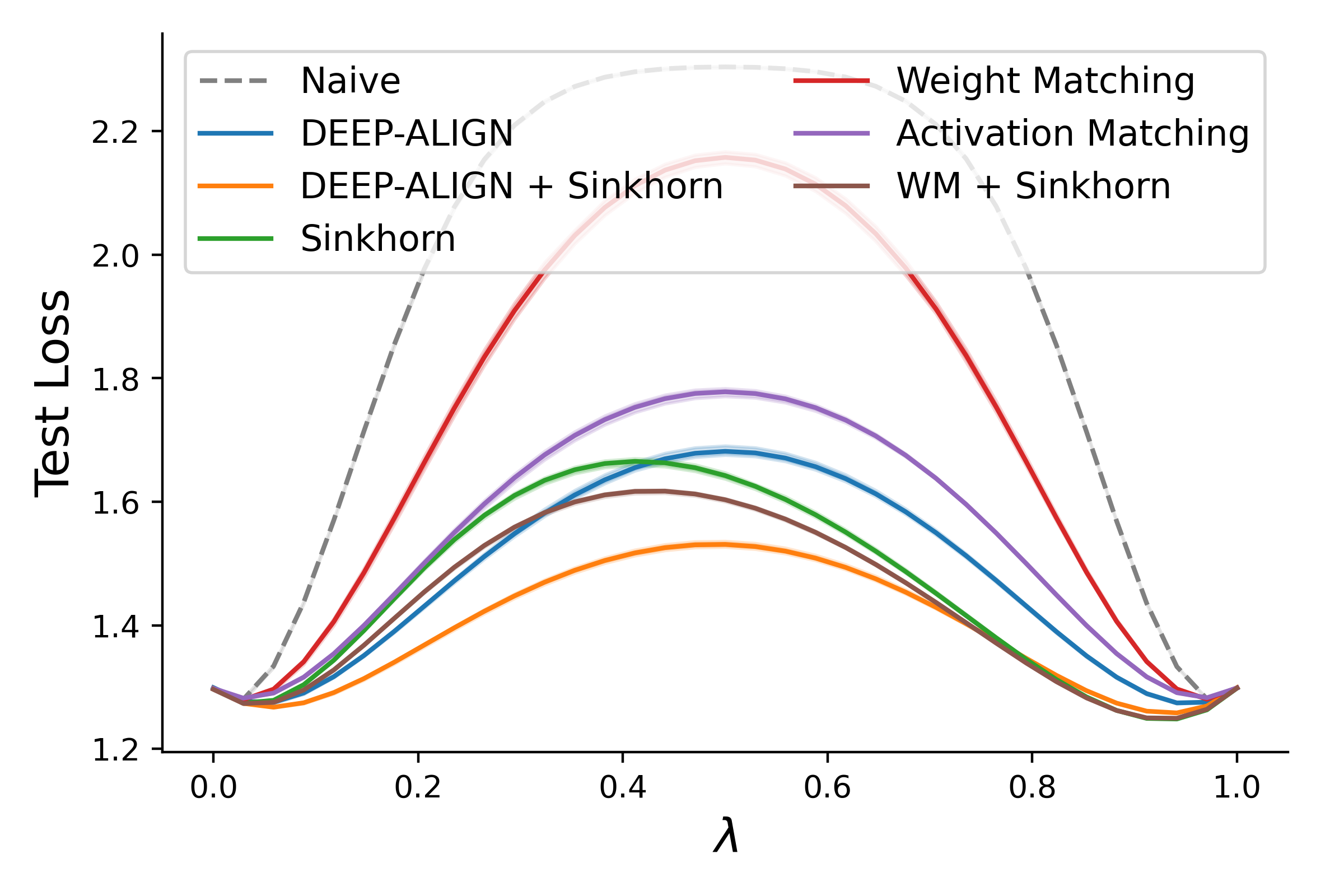}}
  \end{subfigure}
  \hspace*{\fill}   
  \begin{subfigure}[CIFAR10 VGG16.]{
    \includegraphics[width=0.45\linewidth]{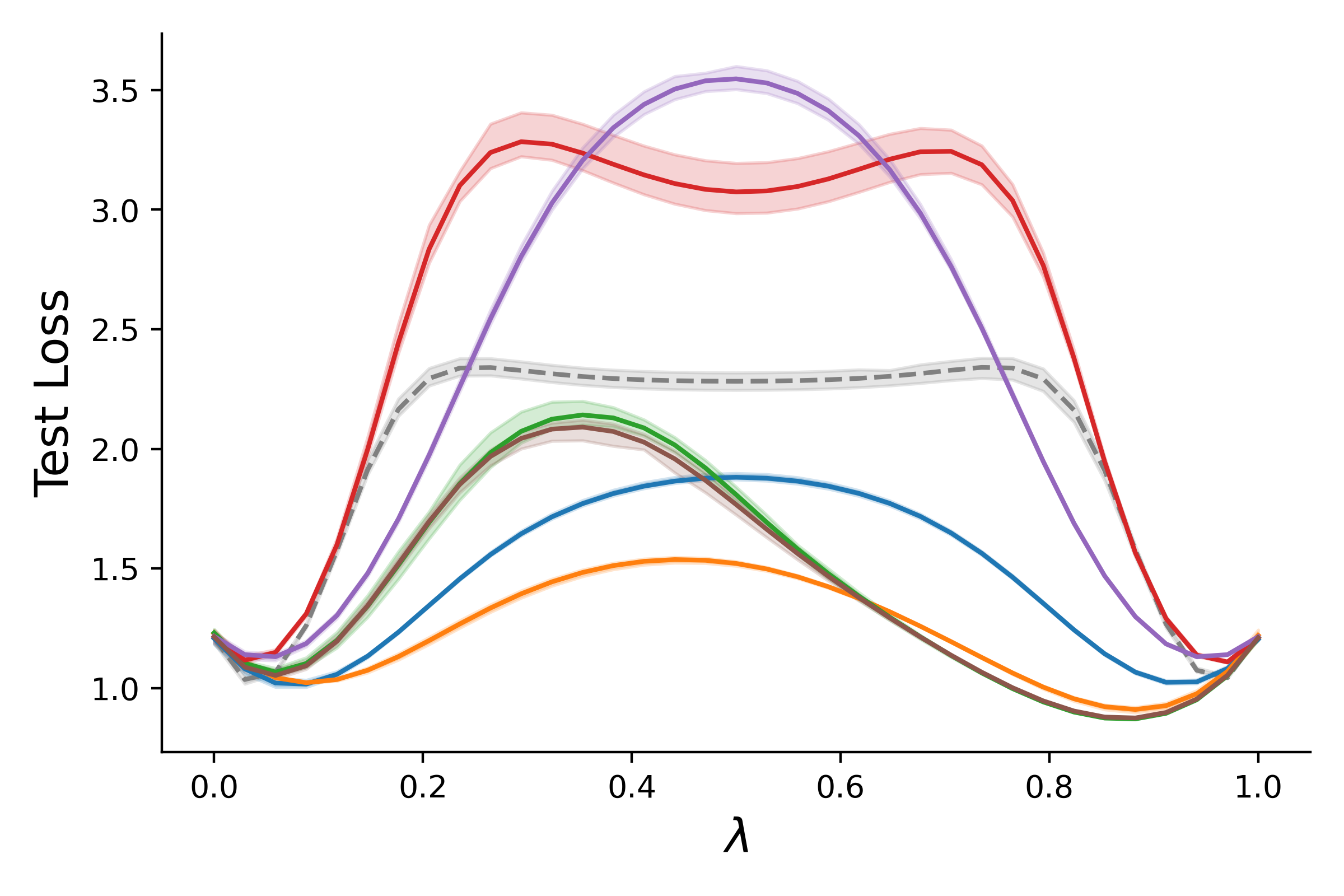}}
  \end{subfigure}

\caption{Additional results for aligning image classifiers.} \label{fig:additional_lmc}
\end{figure*}

\textbf{Additional results for aligning classifiers.} 
\revision{We provide additional results for aligning MLP and CNN image classifiers. The experiments follow the procedure described in Section~\ref{sec:align_cls}. Table~\ref{tab:align_cls_mlp} provides results for aligning MLP classifiers trained using the MNIST and CIFAR10 datasets. \ourmethod{} outperforms all baseline methods. Additionally, Figure~\ref{fig:additional_lmc} provides LMC results for aligning both MLPs and CNNs trained on the MNIST, CIFAR10, and STL10 datasets. \ourmethod{} achieves on-par or improves over baselines, while \ourmethod{} + Sinkhorn outperforms all other methods. Notably, for VGG16, while the weight matching and activation matching improve over the naive method in terms of their respective objectives, they achieve poor barrier performance.}

\textbf{Towards diverse input architectures.} One important extension of the \ourmethod{} framework is the generalization to diverse input networks. As discussed in Section~\ref{sec:conclusion}, this limitation relates to the DWSNet encoder we utilize and can potentially be mitigated by replacing the DWS encoder with GNN-based weight space encoders like~\citet{zhang2023neural,lim2023graph}. Here, we show, on a small-scale experiment, that by modifying the DWSNet encoder, the \ourmethod{} generalizes to input networks with varying depths (number of layers). Specifically, in this experiment, we apply the \ourmethod{} model to MLPs with $3$, $4$, and $5$ layers trained on the MNIST dataset. We implement a variant of the DWSNet encoder in which we share the weight of internal blocks. This allows the encoder to be applied to input networks with varying depths. The results, presented in Tabls~\ref{tab:diverse-arch}, show that \ourmethod{} outperforms the WM baseline and that \ourmethod{} + Sinkhorn achieves on-par results compared to WM + Sinkhorn. These preliminary results demonstrate that the \ourmethod{} framework, together with an appropriate DWS encoder, can successfully be applied to diverse input architectures.

\begin{table}[t]
    \small
\centering
    \captionof{table}{Barrier results for aligning MLP networks with varying depths (3, 4, and 5 layers), trained using the MNIST dataset.}
    \vskip 0.11in
\begin{tabular}{lcc}
\toprule
 &  MNIST Diverse MLPs \\
 \cmidrule(lr){2-2} \\
& Barrier $\downarrow$ \\
 \midrule
Weight Matching & $0.04 \pm 0.00$ \\
Sinkhorn & $0.02 \pm 0.00$\\
Weight Matching + Sinkhorn & $\mathbf{0.01 \pm 0.00}$ \\
\midrule
\ourmethod{} & $0.03 \pm 0.00 $\\
\ourmethod{} + Sinkhorn &  $\mathbf{0.01 \pm 0.00}$\\
\bottomrule
\end{tabular}
\label{tab:diverse-arch}
\end{table}

\textbf{Aligning networks trained on disjoint datasets.} \revision{Following~\cite{ainsworth2022git}, we experiment with aligning networks trained on disjoint datasets. One major motivation for such a setup is Federated learning~\citep{mcmahan2017communication}. In Federated Learning, the goal is to construct a unified model from multiple networks trained on separate and distinct datasets.}

\begin{table}[t]
    \small
\centering
    \captionof{table}{\textit{MLP image classifiers}: Results on aligning MNIST and CIFAR10 MLP image classifiers.}
    \vskip 0.11in
\begin{tabular}{lcccccccc}
\toprule
 &  \multicolumn{2}{c}{MNIST (MLP)} & \multicolumn{2}{c}{CIFAR10 (MLP)} \\
 \cmidrule(lr){2-3} \cmidrule(lr){4-5}
 & Barrier $\downarrow$ & AUC $\downarrow$ & Barrier $\downarrow$ & AUC $\downarrow$
 \\
\midrule
Naive & $2.007 \pm 0.00$ & $0.835 \pm 0.00$ & $0.927 \pm 
0.00$ & $0.493 \pm 0.00$ \\
\midrule
Weight Matching & $0.047 \pm 0.00$ & $0.011 \pm 0.00$ & $0.156 \pm 0.00$ & $0.068 \pm 0.00$ \\
Activation Matching & $0.024 \pm 0.00$ & $0.007 \pm 0.00$ & $0.066 \pm 0.00$ & $0.024 \pm 0.00$ \\
Sinkhorn & $0.027 \pm 0.00$ & $0.002 \pm 0.00$ & $0.183 \pm 0.00$ & $0.072 \pm 0.00$ \\
\revision{WM + Sinkhorn} & $0.012 \pm 0.00$ & $\mathbf{0.000 \pm 0.00}$ & $0.137\pm 0.00$ & $0.050\pm 0.00$ \\
\midrule
\ourmethod{} & $0.005 \pm 0.00$ & $\mathbf{0.000 \pm 0.00}$ & $0.078 \pm 0.01$ & $0.029 \pm 0.00$  \\
\ourmethod{} + Sinkhorn & $\mathbf{0.000 \pm 0.00}$ & $\mathbf{0.000 \pm 0.00}$ & $\mathbf{0.037 \pm 0.00}$ & $\mathbf{0.004 \pm 0.00}$ \\
\bottomrule
\end{tabular}
\label{tab:align_cls_mlp}
\end{table}

\revision{To that end, we split the CIFAR10 dataset into two splits. The first consists of $95\%$ images from classes $0$-$4$ and $5\%$ of classes $5$-$9$, and the second split is constructed accordingly with $95\%$ of classes $5$-$9$. We train the \ourmethod{} model to align CNN networks trained using the different datasets. 
For Sinkhorn and Activation Matching, we assume full access to the training data in the optimization stage. For DEEP-ALIGN, we assume this data is accessible in the training phase.
The results are presented in Figure~\ref{fig:disjoint_data}. \ourmethod{}, along with the Sinkhorn and Activation Matching approaches, are able to align and merge the networks to obtain a network with lower loss compared to the original models. However, our approach is significantly more efficient at inference.}


\begin{figure*}[t]
  \begin{subfigure}[MNIST MLPs.]{
    \includegraphics[width=0.3\linewidth]{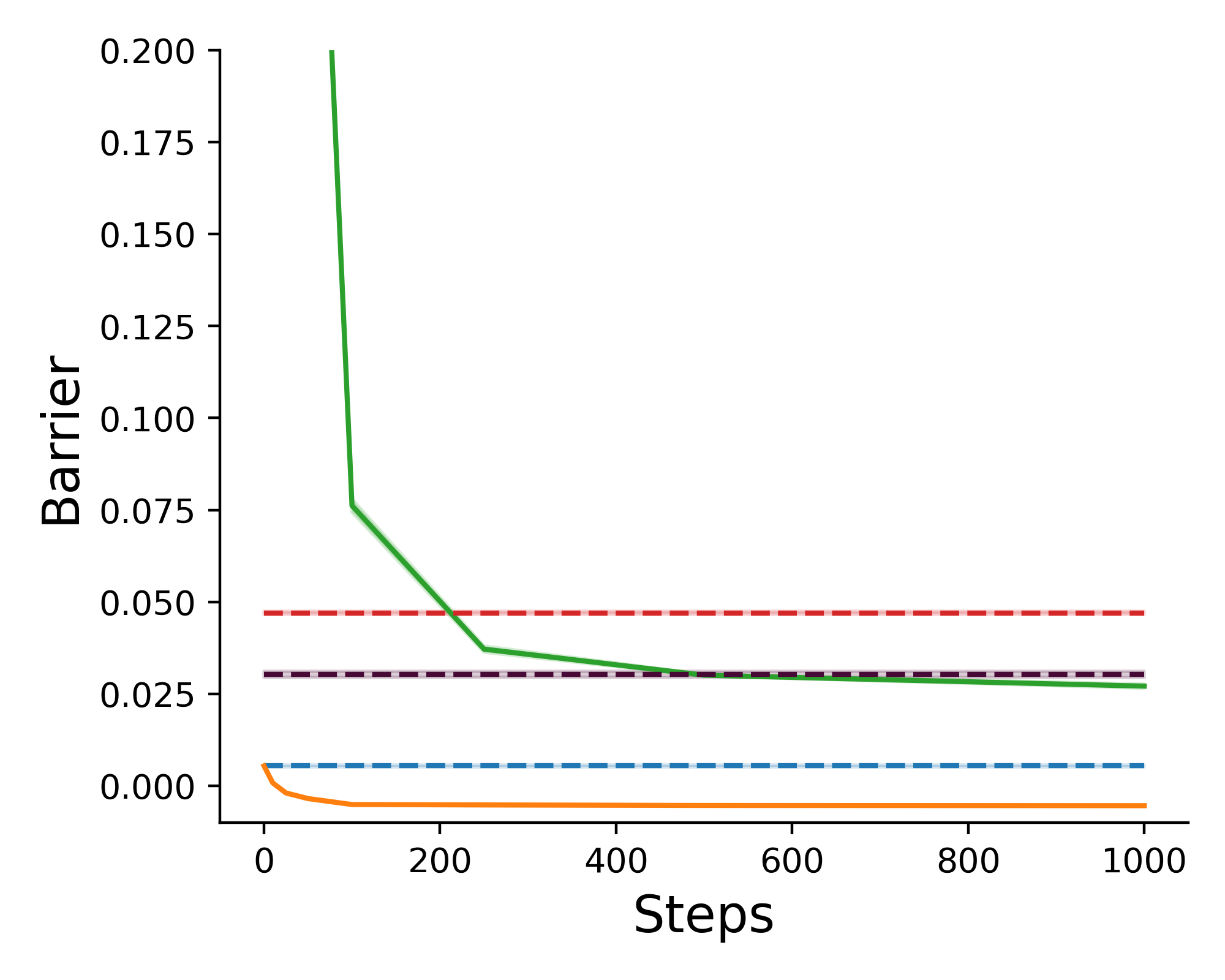}
     \label{fig:mnist_mlp}}
  \end{subfigure}%
  \hspace*{\fill}   
  \begin{subfigure}[CIFAR10 MLPs.]{
    \includegraphics[width=0.3\linewidth]{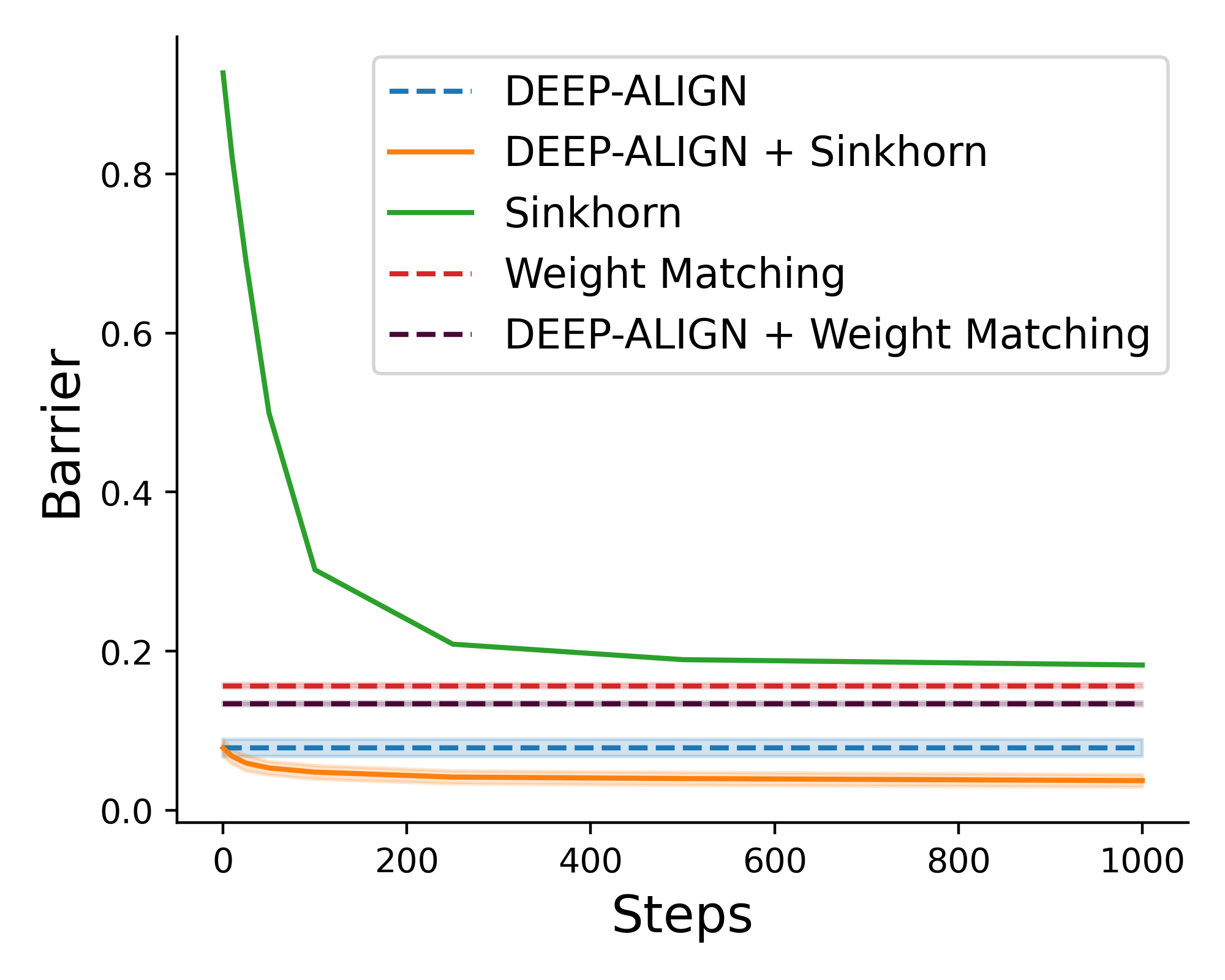}
     \label{fig:cifar10_mlp_init}}
  \end{subfigure}%
  \hspace*{\fill}   
  \begin{subfigure}[CIFAR10 CNNs.]{
    \includegraphics[width=0.3\linewidth]{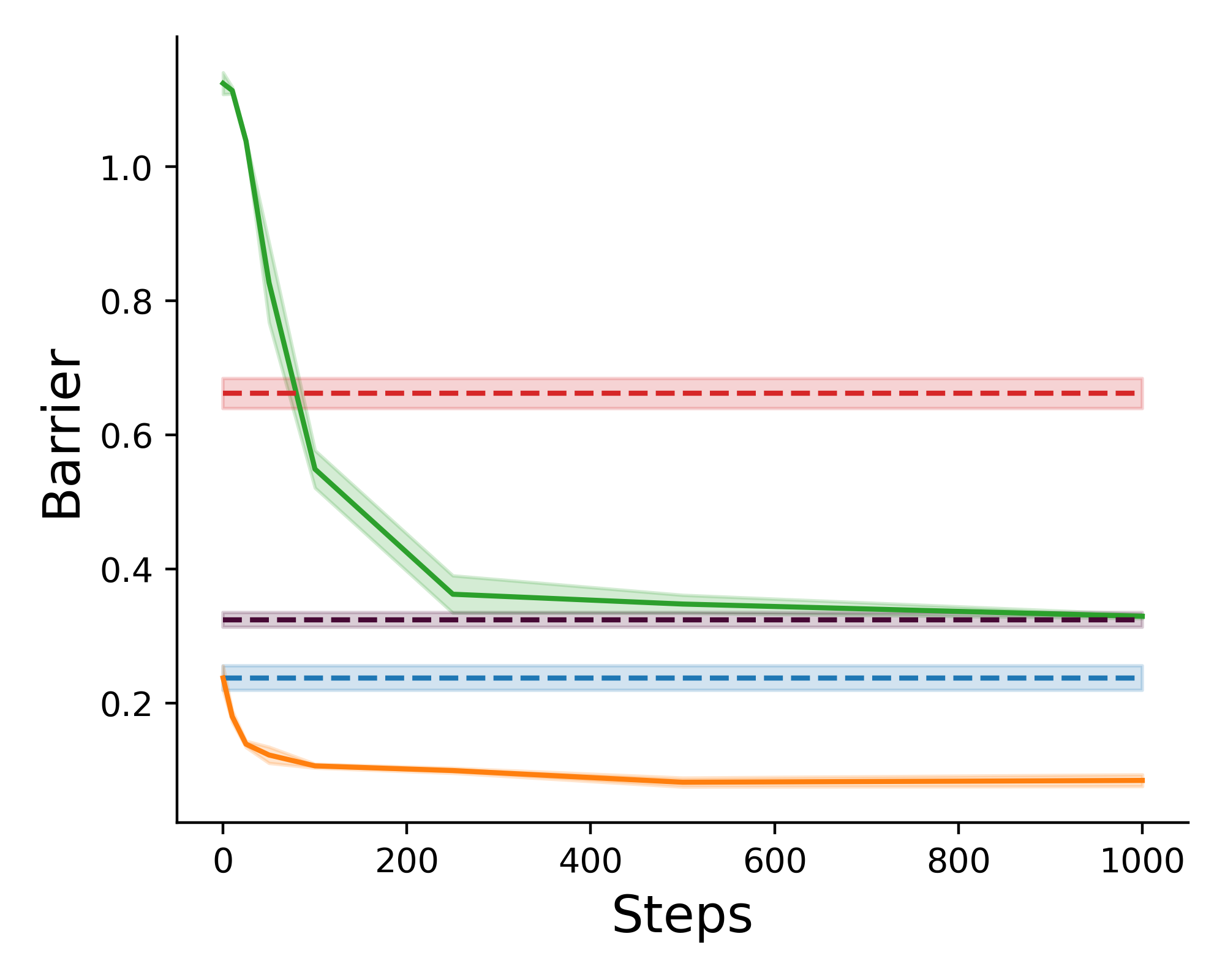}
    \label{fig:cifar10_cnn}}
  \end{subfigure}

\caption{\ourmethod{} as initialization: Results for using \ourmethod{} as initialization for the optimization-based approaches Sinkhorn re-basin and weight matching.} \label{fig:align_cls_init}
\end{figure*}

\textbf{Results for the Activation matching + Sinkhorn baseline.} Here, we provide results for the \textit{activation matching (AM) + Sinkhorn} baseline, in which we use the AM solution to initialize the Sinkhorn optimization process. We commit this baseline from the main paper due to its extremely long runtime. Nonetheless, \ourmethod{}+Sinkhorn outperforms this runtime expensive baseline, as shown in Table~\ref{tab:am_sink_baseline}.

\begin{table}[t]
    \small
\centering
    \captionof{table}{Barrier results for the AM + Sinkhorn baseline. \ourmethod{} + Sinkhorn outperforms AM + Sinkhorn in terms of the Barrier metric and runtime.}
    \vskip 0.11in
\begin{tabular}{lccccccc}
\toprule
 &  CIFAR10 (CNN) & STL10 (CNN) \\
 \cmidrule(lr){2-2}  \cmidrule(lr){3-3}\\
& \multicolumn{2}{c}{Barrier $\downarrow$} & Runtime $\downarrow$ \\

 \midrule
Activation Matching & $0.23 \pm 0.01$	& $0.47 \pm 0.00$ & $6.38$ \\
Sinkhorn &  $0.31 \pm 0.01$ & $0.36 \pm 0.00$  & $37.74$ \\
\revision{AM + Sinkhorn} & $0.10 \pm 0.01$ & $0.26 \pm 0.00$ & $44.12 = 6.38 + 37.74 $ \\
\midrule
\ourmethod{} & $0.23 \pm 0.01$	& $0.38 \pm 0.01$ & $\mathbf{0.20}$ \\
\ourmethod{} + Sinkhorn & $\mathbf{0.08 \pm 0.00}$ & $\mathbf{0.23 \pm 0.00}$ & $37.94 = 0.20 + 37.74$ \\
\bottomrule
\end{tabular}
\label{tab:am_sink_baseline}
\end{table}

\textbf{\ourmethod{} as initialization.} As discussed in the main text, our approach can be used as initialization for optimization-based approaches, like the Sinkhorn re-basin. 
Here, provide extended results on using the output of \ourmethod{} as the initial value for the alignment problem. We evaluate two previously proposed methods, weight-matching (WM)~\citep{ainsworth2022git} and Sinkhorn re-basin~\citep{pena2023re}. Initializing the Sinkhorn method significantly improves the performance under all evaluated datasets. In addition, using \ourmethod{} initialization greatly improves the convergence speed. Furthermore, \ourmethod{} improves the barrier results of the weight-matching method. Notably, using the \ourmethod{} initialization achieves on-par or improved values for the WM objective.

\textbf{Effect of sample size.} We evaluate \ourmethod{} on the CIFAR10 CNN classifiers and the sine-wave INRs experiment, using a varying
number of training examples. The results are presented in Figure~\ref{fig:cifar10_n_samples} (in the main text) and Figure~\ref{fig:sample_size}. 
On the CNNs dataset, \ourmethod{} produces alignments with on-par quality to the Sinkhorn method, with only $\sim 100$ training samples. Using the INRs dataset, \ourmethod{} achieves on-par results w.r.t the Sinkhorn method with random initialization using $1800$ training samples. On the other hand, for both datasets, initializing the Sinkhorn method with the \ourmethod{} alignment shows significant improvement in the test barrier using only $100$ training samples. These results show the efficiency of \ourmethod{} both in producing model alignments or initializing optimization-based approaches.

\begin{figure}
    \centering
    \includegraphics[width=0.5\linewidth]{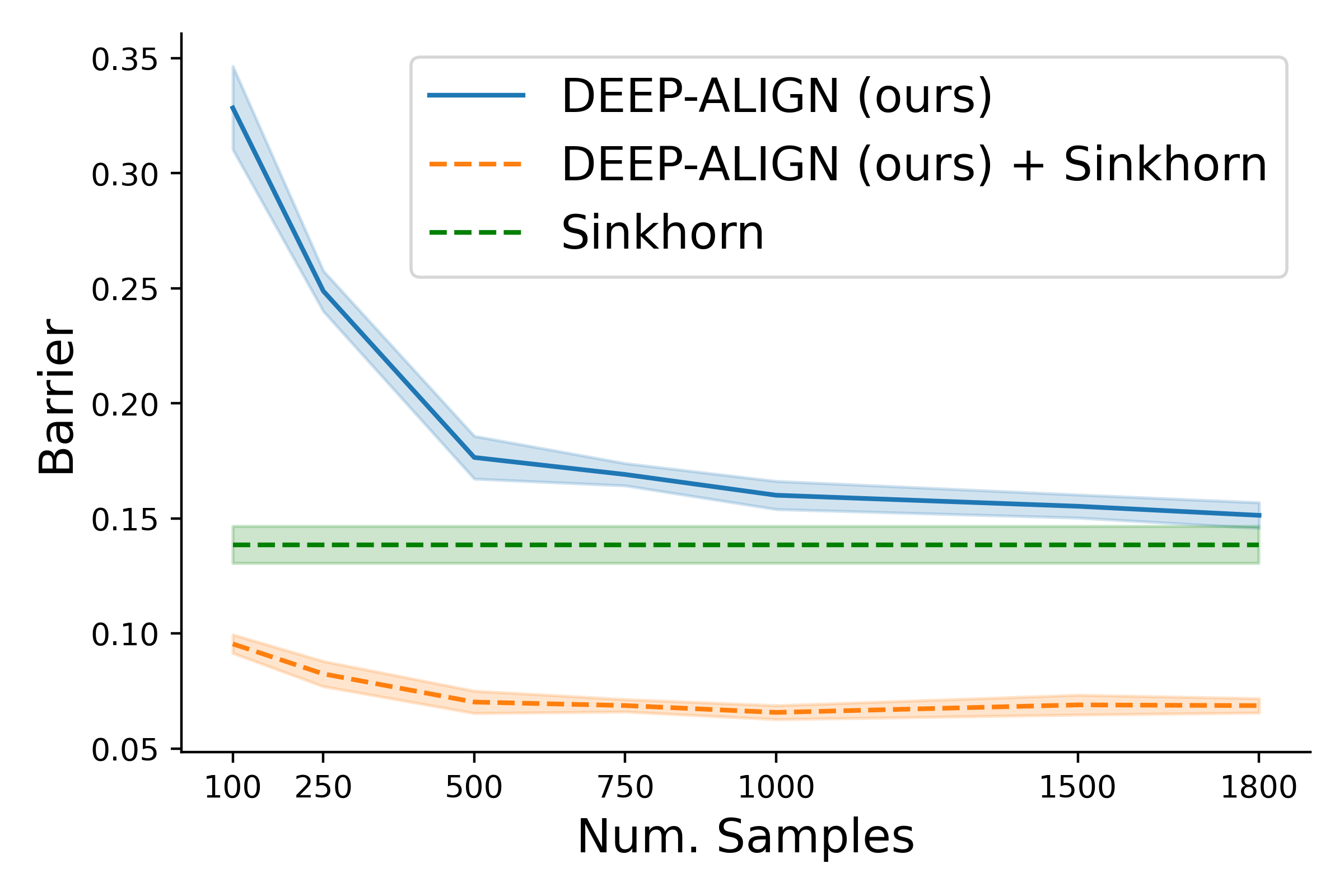}
    \caption{\textit{Effect of sample size}: \ourmethod{} achieves on par results w.r.t the Sinkhorn with $1800$ training pairs, while only $100$ pairs are sufficient to significantly improve Sinkhorn by initializing the alg. with the \ourmethod{} outputs.}
    \label{fig:sample_size}
\end{figure}

\textbf{Ablation on the \ourmethod{} objective.} \label{app:ablation} Here, we provide results for \ourmethod{} trained with different objectives. 
Recall that we introduced three objectives (losses) to train \ourmethod{}. The first is the supervised loss $\ell_{\text{supervised}}$ computed using a model and its permuted version. The second is $\ell_{\text{alignment}}$ which is the $L2$ loss between the aligned weight vectors, and the third is $\ell_{\text{LMC}}$ which evaluates the original task loss on the line segment between the aligned models.
For this ablation study, we use the MNIST and CIFAR10 MLP classifiers along with the CIFAR10 INRs. The results are presented in Table~\ref{tab:loss_ablation}. Using only the supervised and alignment loss generally achieves insufficient results in terms of the Barrier metric. Dropping the alignment loss and using the supervised and LMC losses appears to have a minimal impact on the results in the classifier experiments. However, interestingly, including the $\ell_{\text{alignment}}$ in the INR experiment seems to have a detrimental effect on the Barrier results, causing a significant drop in performance. This suggests the alignment loss and the barrier metric are not always well aligned. In these cases it is advised to drop the $\ell_{\text{alignment}}$ loss and optimize \ourmethod{} with the $\ell_{\text{supervised}}$ and $\ell_{\text{LMC}}$ losses.


\begin{table}[t]
\small
\centering
    \caption{\textit{Optimizing \ourmethod{} using different objectives}: Test Barrier results averaged over 3 random seeds.}
    \vskip 0.11in
\begin{tabular}{lccc}
\toprule
 & \multicolumn{1}{c}{MNIST MLP CLS} 
 & \multicolumn{1}{c}{CIFAR10 MLP CLS} 
 & \multicolumn{1}{c}{CIFAR10 INR} \\
\midrule
$\ell_{\text{supervised}} + \ell_{\text{LMC}}$ & $0.007 \pm 0.00$ & $\mathbf{0.070 \pm 0.00}$ & $\mathbf{0.063\pm 0.00}$ \\
$\ell_{\text{supervised}} + \ell_{\text{alignment}} $ & $0.061 \pm 0.00$ & $0.343 \pm 0.00$ & $0.127\pm 0.00$ \\
$\ell_{\text{supervised}} + \ell_{\text{LMC}} + \ell_{\text{alignment}}$ & $\mathbf{0.005 \pm 0.00}$ & $0.078 \pm 0.00$ & $0.087\pm 0.00$ 
\\
\bottomrule
\end{tabular}
\label{tab:loss_ablation}
\end{table}


\textbf{Visualization of predicted permutations.} We visualize the predicted permutation obtained using \ourmethod{} applied to three test sine-waves INRs. Each network pair consists of an INR and its permuted and noisy version. For clarity, we depict only the first permutation matrix, $P_1$. The rows of Figure~\ref{fig:predict_perms} correspond to the three test INRs. The left column represents the output from the $F_{prod}$ layer, which then projected to the set of permutations using $F_{prod}$ (middle column). \ourmethod{} is able to perfectly predict the ground truth permutations (right column).

\begin{figure}[t]
    \centering
    \includegraphics[width=0.65\linewidth]{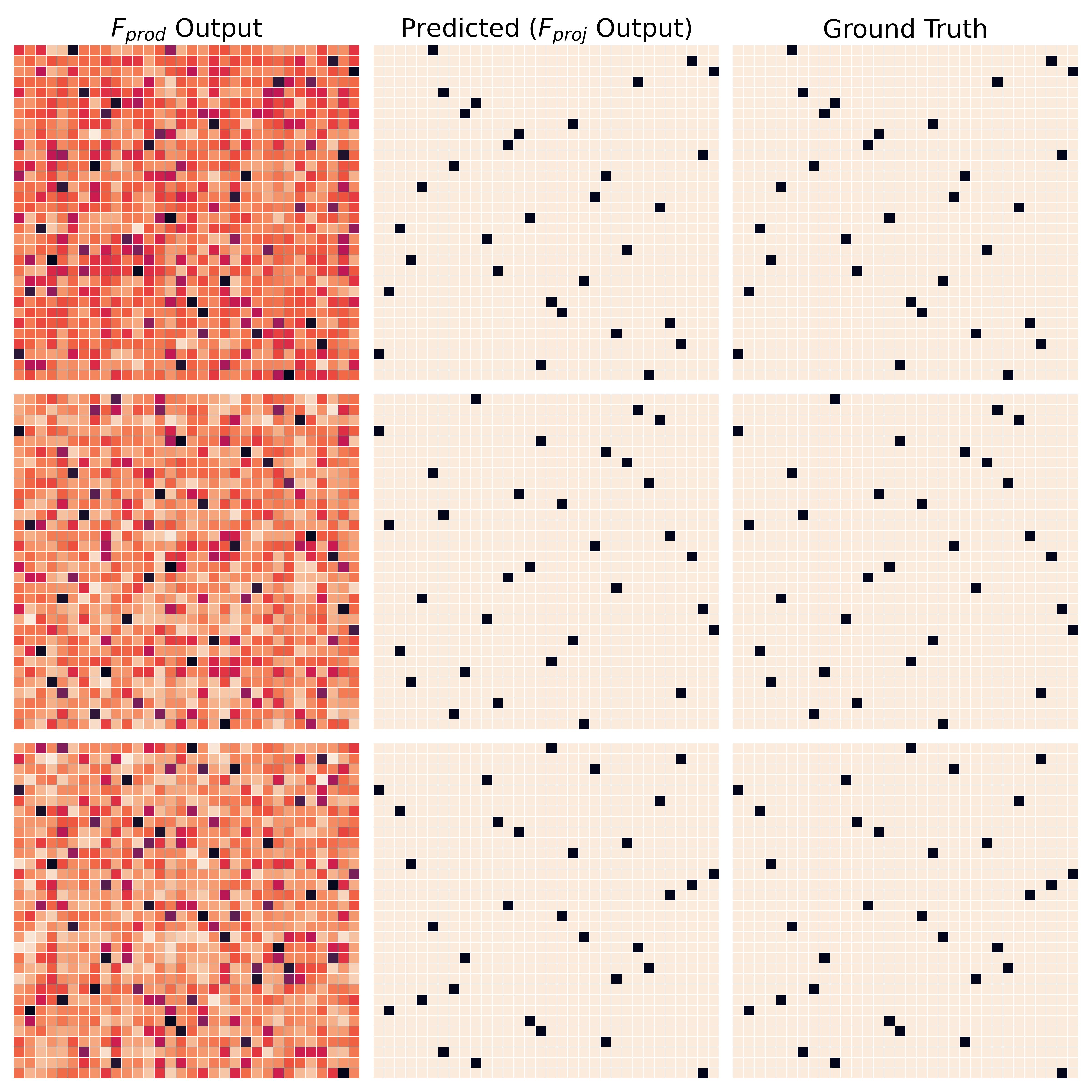}
    \caption{Predicted permutation matrices and ground truth permutations for three test sine wave INRs and their permutated and noisy version. \ourmethod{} outputs the exact ground truth permutations.}
    \label{fig:predict_perms}
\end{figure}

\end{document}